\definecolor{codegreen}{rgb}{0,0.6,0}
\definecolor{codegray}{rgb}{0.5,0.5,0.5}
\definecolor{codepurple}{rgb}{0.58,0,0.82}
\definecolor{backcolour}{rgb}{0.95,0.95,0.92}
\lstdefinestyle{mystyle}{
    backgroundcolor=\color{backcolour},   
    commentstyle=\color{codegreen},
    keywordstyle=\color{magenta},
    numberstyle=\tiny\color{codegray},
    stringstyle=\color{codepurple},
    basicstyle=\ttfamily\bfseries\footnotesize,
    breakatwhitespace=false,         
    breaklines=true,                 
    captionpos=b,                    
    keepspaces=true,                 
    numbers=left,                    
    numbersep=5pt,                  
    showspaces=false,                
    showstringspaces=false,
    showtabs=false,                  
    tabsize=2
}
\newcounter{Lcount}
\newcommand{\numsquishlist}{
   \begin{list}{\arabic{Lcount}. }
    { \usecounter{Lcount}
 \setlength{\itemsep}{-.1ex}      \setlength{\parsep}{0ex}
      \setlength{\topsep}{0ex}       \setlength{\partopsep}{0ex}
      \setlength{\leftmargin}{1em} \setlength{\labelwidth}{1em}
      \setlength{\labelsep}{0.1em} } }
\newcommand{\numsquishend}{\end{list}}
\newcommand{\squishlist}{
   \begin{list}{$\bullet$}
    { \setlength{\itemsep}{-.1ex}      \setlength{\parsep}{0ex}
      \setlength{\topsep}{0ex}       \setlength{\partopsep}{0ex}
      \setlength{\leftmargin}{.8em} \setlength{\labelwidth}{1em}
      \setlength{\labelsep}{0.5em} } }
\newcommand{\squishend}{\end{list}}
\newcommand{\clip}{{\sc Coordination Initiator Inference Problem}\xspace}%
\DeclareMathOperator*{\argmin}{\mathop{\mathrm{argmin}}\limits}
\newcommand{\argmax}{\mathop{\mathrm{argmax}}\limits}
\begin{document}
\title{Variable-lag Granger Causality and Transfer Entropy for Time Series Analysis}

  
\author{Chainarong Amornbunchornvej}
\orcid{0000-0003-3131-0370}
\affiliation{\institution{National Electronics and Computer Technology Center}
\city{Pathum Thani}
  \country{Thailand}}
\email{chainarong.amo@nectec.or.th}

\author{Elena Zheleva}
\affiliation{\institution{University of Illinois at Chicago}
\city{Chicago}\state{IL}
\country{USA}}
\email{ezheleva@uic.edu}

\author{Tanya Berger-Wolf}
\affiliation{\institution{University of Illinois at Chicago}
\city{Chicago}\state{IL}
\country{USA}}
\affiliation{\institution{The Ohio State University}
\city{Columbus}\state{OH}
\country{USA}}
\email{berger-wolf.1@osu.edu}

\renewcommand{\shortauthors}{C. Amornbunchornvej et al.}

\begin{abstract}
Granger causality is a fundamental technique for causal inference in time series data, commonly used in the social and biological sciences. Typical operationalizations of Granger causality make a strong assumption that every time point of the effect time series is influenced by a combination of other time series with a fixed time delay. The assumption of fixed time delay also exists in Transfer Entropy, which is considered to be a non-linear version of Granger causality.  However, the assumption of the fixed time delay does not hold in many applications, such as collective behavior, financial markets, and many natural phenomena. To address this issue, we develop Variable-lag Granger causality and Variable-lag Transfer Entropy, generalizations of both Granger causality and Transfer Entropy that relax the assumption of the fixed time delay and allow causes to influence effects with arbitrary time delays. In addition, we propose methods for inferring both variable-lag Granger causality and Transfer Entropy relations. In our approaches, we utilize an optimal warping path of Dynamic Time Warping (DTW) to infer variable-lag causal relations. We demonstrate our approaches on an application for studying coordinated collective behavior and other real-world casual-inference datasets and show that our proposed approaches  perform better than several existing methods in both simulated and real-world datasets. Our approaches can be applied in any domain of time series analysis. The software of this work is available in the R-CRAN package: VLTimeCausality.

\end{abstract}

%
%
\begin{CCSXML}
<ccs2012>
<concept>
<concept_id>10002951.10003227.10003236</concept_id>
<concept_desc>Information systems~Spatial-temporal systems</concept_desc>
<concept_significance>500</concept_significance>
</concept>
<concept>
<concept_id>10002951.10003227.10003351</concept_id>
<concept_desc>Information systems~Data mining</concept_desc>
<concept_significance>500</concept_significance>
</concept>
<concept>
<concept_id>10010147.10010178.10010219.10010223</concept_id>
<concept_desc>Computing methodologies~Cooperation and coordination</concept_desc>
<concept_significance>300</concept_significance>
</concept>
</ccs2012>
\end{CCSXML}

\ccsdesc[500]{Information systems~Spatial-temporal systems}
\ccsdesc[500]{Information systems~Data mining}
\ccsdesc[300]{Computing methodologies~Cooperation and coordination}

\keywords{Granger Causality, Transfer Entropy, Time Series, Causal Inference, Statistical Methodology}

\maketitle

\section{Introduction}

Inferring causal relationships from data is a fundamental problem in statistics, economics, and science in general. The gold standard for assessing causal effects is running randomized controlled trials which randomly assign a treatment (e.g., a drug or a specific user interface) to a subset of a population of interest, and randomly select another subset as a control group which is not given the treatment, thus attributing the outcome difference between the two groups to the treatment. However, in many cases, running such trials may be unethical, expensive, or simply impossible~\cite{Varian7310}. 
To address this issue, several methods have been developed to estimate causal effects from observational data~\cite{pearl2000causality,Spirtes1993}. 

In the context of time series data, a well-known method that defines a causal relation in terms of \emph{predictability} is Granger causality~\cite{granger1969investigating}. $X$ Granger-causes $Y$ if past information on $X$ predicts the behavior of $Y$ better than $Y$'s past information alone~\cite{Arnold:2007:TCM:1281192.1281203}. In this work, when we refer to causality, we mean specifically the predictive causality defined by Granger causality. The key assumptions of Granger causality are that 1) the process of effect generation can be explained by a set of structural equations, and 2) the current realization  of the effect at any time point is influenced by a set of causes in the past. Similar to other causal inference methods, Granger causality assumes unconfoundedness and that all relevant variables are included in the analysis~\cite{granger1969investigating,peters2017elements}. 

There are several studies that have been developed based on Granger causality ~\cite{liu2012sparse,atukeren2010relationship,peters2013causal}. 

Granger causality is typically studied in the context of linear structural equations. \textit{Transfer Entropy} has been developed as a non-linear extension of Granger causality~\cite{schreiber-prl00,lee2012transfer,PhysRevLett.103.238701}.

The typical operational definitions~\cite{atukeren2010relationship} and inference methods for inferring Granger causality, including the common software implementation packages~\cite{MLSourcecode,RSourcecode}, assume that the effect is influenced by the cause with a fixed and constant time delay. 

 However, the assumption of an effect is fixed-lag influenced by the cause still exists in both Granger causality and transfer entropy. 

This assumption of a fixed and constant time delay between the cause and effect is, in fact, too strong for many applications of understanding natural world and social phenomena. In such domains, data is often in the form of a set of time series and a common question of interest is which time series are the (causal) initiators of patterns of behaviors captured by another set of time series. For example, who are the individuals who influence a group's direction in collective movement? What are the sectors that influence the stock market dynamics right now? Which part of the brain is critical in activating a response to a given action? In all of these cases, effects follow the causal time series with delays that can vary over time~\cite{FLICAtkdd}. The fact that one time series can be caused by multiple initiators and  these initiators can be inferred from time series data~\cite{FLICAtkdd,Arnold:2007:TCM:1281192.1281203}.

To address the remaining gap, we introduce the concepts \emph{Variable-lag Granger causality} and \emph{Variable-lag Transfer Entropy} and methods to infer them in time series data. We prove that our definitions and the proposed inference methods can address the arbitrary-time-lag influence between cause and effect, while the traditional operationalizations of Granger causality, transfer entropy, and their corresponding inference methods cannot. We show that the traditional definitions are indeed special cases of the new relations we define. We demonstrate the applicability of the newly defined causal inference frameworks by inferring initiators of collective coordinated movement, a problem proposed in~\cite{FLICAtkdd}, as well as inferring casual relations in other real-world datasets. 

We use Dynamic Time Warping (DTW)~\cite{sakoe1978dynamic} to align the cause $X$ to the effect time series $Y$ while leveraging the power of Granger causality and transfer entropy. In the literature, there are many clustering-based Granger causality methods that use DTW to cluster time series and perform Granger causality only for time series within the same clusters~\cite{yuan2016deep,Peng:2007:SSI:1288552.1288557}. Previous work on inferring causal relations using both Granger causality and DTW has the assumption that the smaller warping distance between two time series, the stronger the causal relation is~\cite{sliva2015tools}. If the minimum distance of elements within the DTW optimal warping path is below a given distance threshold, then the method considers that there is a causal relation between the two time series. However, their work assumes that Granger causality and DTW run independently. 
In contrast, our method formalizes the integration of Granger causality and DTW by generalizing the definition of Granger causality itself and using DTW as an instantiation of the optimal alignment requirement of the time series. 

In addition to the standard uses of Granger causality and transfer entropy, our methods are capable of:
\squishlist
\item {\bf Inferring arbitrary-lag causal relations:} our methods can infer a causal relation of Granger or transfer entropy where a cause influences an effect with arbitrary delays that can change dynamically; 
\item {\bf Quantifying variable-lag emulation:} our methods can report the similarity of time series patterns between the cause and the delayed effect, for arbitrary delays; 
\squishend

We also prove that when multiple time series cause the behavioral convergence of a set of time series then we can treat the set of these initiating causes in the aggregate and there is a causal relation between this aggregate cause (of the set of initiating time series) and the aggregate of the rest of the time series. We provide  many experiments and examples using both simulated and real-world datasets to measure the performance of our approach in various causality settings and discuss the resulting domain insights. Our framework is highly general and can be used to analyze time series from any domain.

\section{Related work}
Granger causality has inspired a lot of research since its introduction in 1969~\cite{granger1969investigating}. Recent works on Granger causality has focused on various generalizations for it, including ones based on information theory, 
such as transfer entropy~\cite{schreiber-prl00,shibuya-kdd09} and directed information graphs~\cite{7273888}. Recent inference methods are able to deal with missing data~\cite{iseki-aaai19} and enable feature selection~\cite{Sun2015}. Granger causality has even been explored as a method to offer explainability of machine learning models~\cite{schwab-aaai19}. 
However, none of them study tests for Variable-lag Granger causality, as we formalize and propose in this work. 

Many causal inference methods assume that the data is {\em i.i.d.} and rely on knowing a mechanism that generates that data, e.g., expressed through causal graphs or structural equations~\cite{pearl2000causality}. 
In time series data, there are two ways in which time series can be {\em i.i.d.}: 1) the points of one time series are independent of other points in the same time series, 2)  one time series is independent of another time series. Obviously, in most time series, the values of the consecutive time steps violate the {\em i.i.d.} assumption (the first way). In causal inference, the field focuses on the independent between two time series in the second way.

Another set of causal inference methods relax this strong {\em i.i.d} assumption, and instead assume independence between the cause 
and the mechanism 
generating the effect 
~\cite{janzing2010causal,scholkopf2012causal,shajarisales2015telling}. Specifically, knowing a distribution of random variable of cause $X$ never reveals information about the structural function $f(X)$ and vice versa. This idea has been used in the context of times series data~\cite{shajarisales2015telling} by relying on the concept of Spectral Independence Criterion (SIC). If a cause $X$ is a stationary process that generates the effect $Y$ via linear time invariance filter $h$ (mechanism), then $X$ and $h$ should not contain any information about each other but dependency between them and $Y$ exists in spectral sense.

There is a framework of causal inference in~\cite{malinsky2018causal} based on conditional independence tests on time series generated from some discrete-time stochastic processes that allows unknown latent variables. However, the approach in~\cite{malinsky2018causal} still assumes that data points at any time step have been generated from some structural vector autoregression (SVAR). The recent work in~\cite{griveau2019efficient} models causal relation between time series as a form of polynomial function and uses a stochastic block model to find a causal graph. Both works, however, still have the assumption of fixed-lag influence from causes to effects.

Besides, no method studies a causal structure that is unstable\footnote{Unstable causal structures means a relation between effect and causes can be changed overtime. In other words, given time series $X$ causes $Y$, $Y(t)=f(X_1,\dots,X_{t-1})$ and $Y(t')=f'(X_1,\dots,X_{t-1})$ where $t\neq t'$,  $f$ and $f'$ might not be the same. } overtime~\cite{doi:10.1098/rsta.2011.0613}.

Moreover, Transfer Entropy, which is considered to be a non-linear extension of Granger causality~\cite{schreiber-prl00,lee2012transfer,PhysRevLett.103.238701}, still has the fixed-lag assumption.

In our work, we also relax the stationary assumption of time series.


\section{Extension from previous work}
This paper is an extension of our conference proceeding~\cite{VLGranger}. In our previous work~\cite{VLGranger}, we formalized VL-Granger causality and proposed a framework to infer a causal relation using BIC and F-test as main criteria to infer whether $X$ causes $Y$. 

In this work, we formalize \textit{Variable-Lag Transfer Entropy}, which is a non-linear extension of Granger-causality. We investigate the challenge of generalizing Transfer Entropy by relaxing its fixed-lag assumption. Then, we propose a framework to infer VL-Transfer Entropy causal relations.

Moreover, we extend our work on VL-Granger Causality and propose to use a \textit{Bayesian Information Criterion difference ratio} or BIC difference ratio, which is a normalized BIC, as a main criterion. There is evidence that BIC performs better than other model-selection criteria in general~\cite{raffalovich2008model,granger2004forecasting,atukeren2010relationship}.
We also add two new real-world datasets and additional experiments in this current work.

\section{Granger causality and fixed lag limitation}
\begin{figure}[t!]
\centering
\includegraphics[width=.7\columnwidth]{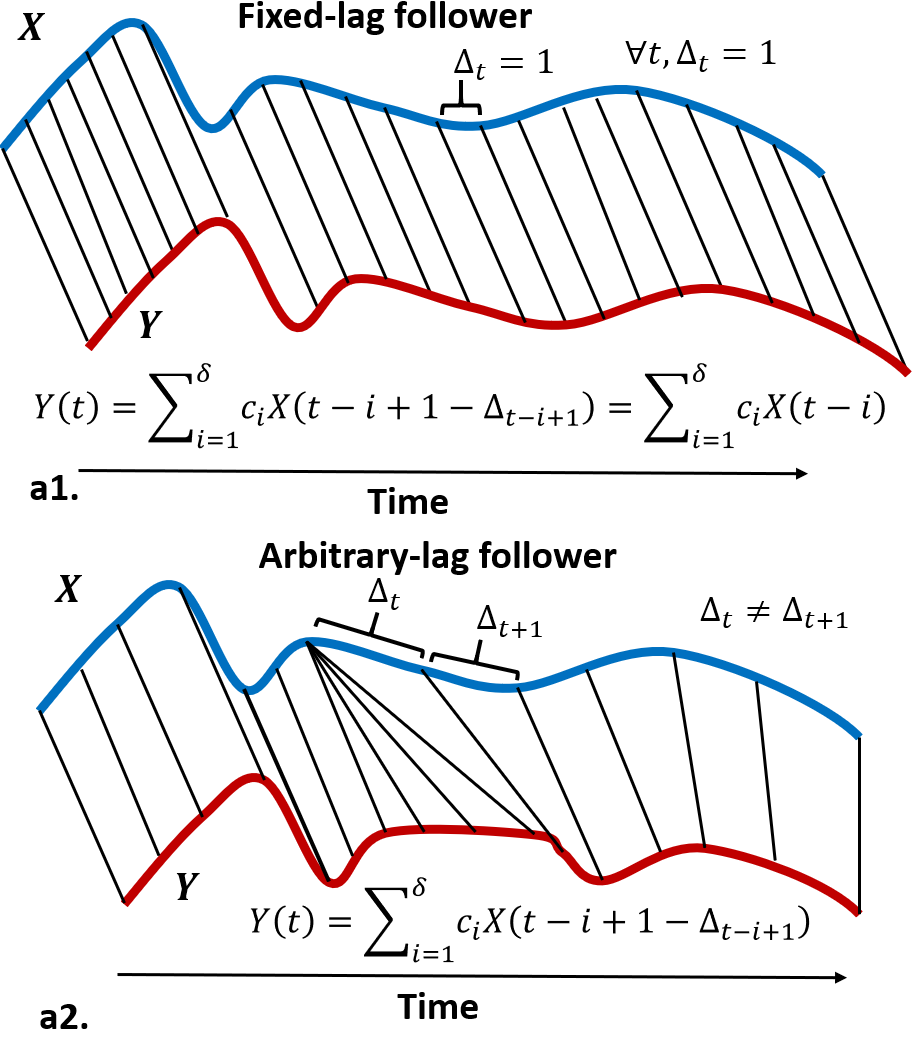}
\caption{(a1-2.) A leader (blue) influences a follower (red) at a specific time point via black lines. (a1.) The follower is a distorted version of a leader with a fixed lag.  (a2.) The follower  is a distorted version of a leader with non-fixed lags in that violates an assumption of Granger causality. Granger causality can handle only the former case and typically fails to handle later case.  We propose the generalization of Granger causality to handle variable-lag situation (equation in a2.).  }
\label{fig:LinearVSArbLag}
\end{figure}

Let $X=(X(1),\dots,X(t),\dots)$ be a time series. We will use $X(t)\in \mathbb{R}$ to denote an element of $X$ at time $t$. Given two time series $X$ and $Y$, it is said that  $X$ Granger-causes~\cite{granger1969investigating}  $Y$ if the information of $X$ in the past helps  improve the prediction of the behavior of $Y$, over $Y$'s past information alone~\cite{Arnold:2007:TCM:1281192.1281203}. The typical way to operationalize this general definition of  Granger causality ~\cite{atukeren2010relationship} is to define it as follows:
\begin{definition}[Granger causal relation]
\label{def:GC}
Let $X$ and $Y$ be time series, and $\delta_{max}\in\mathbb{N}$ be a maximum time lag. We define two residuals of regressions of $X$ and $Y$,  $r_{Y}, r_{YX}$, below: 
\begin{equation}\label{eq1}
	r_{Y}(t) = Y(t) - \sum_{i=1}^{\delta_{max}}a_i Y(t-i),
\end{equation}
\begin{equation}\label{eq2}
	r_{YX}(t) = Y(t)- \sum_{i=1}^{\delta_{max}} (a_i Y(t-i) +  b_i X(t-i)),
\end{equation}
where $a_i$ and $b_i$ are constants that optimally minimize the residual from the regression. Then $X$ Granger-causes $Y$ if the variance of $r_{YX}$ is less than  the variance of $r_Y$.
\end{definition}

This definition assumes that, for all $t>0$, $Y(t)$ can be predicted by the fixed linear combination of $a_1Y(1),\dots,a_{t-\Delta}Y(t-\Delta)$ and $b_1X(1),\dots,b_{t-\Delta}X(t-\Delta)$ with some fixed $\Delta>0$ and every $a_i,b_i$ is a fixed constant over time~\cite{atukeren2010relationship,Arnold:2007:TCM:1281192.1281203}. However, in reality, two time series might influence each other with a sequence of arbitrary, non-fixed time lags. For example, Fig.~\ref{fig:LinearVSArbLag}(a2.) has $X$ as a cause time series and $Y$ as the effect time series that imitates the values of $X$ with arbitrary lags.  Because $Y$ is not affected by $X$ with a fixed lags and the linear combination above can  change over time,  the standard Granger causality tests cannot appropriately infer Granger-causal relation between $X$ and $Y$ even if $Y$ is just a slightly distorted version of $X$ with some lags. For a concrete example, consider a movement context where time series represent  trajectories. Two people follow each other if they move in the same trajectory. Assuming the followers follow leaders with a fixed lag means the followers walk lockstep with the leader, which is not the natural way we walk.  Imagine two people embarking on a walk. The first starts walking, the second catches up a little later. They may walk together for a bit, then the second stops to tie the shoe and catches up again. The delay between the first and the second person keeps changing, yet there is no question the first sets the course and is the cause of the second's choices where to go. Fig.~\ref{fig:LinearVSArbLag} illustrates this example. 


\section{Variable-lag Granger Causality}

Here, we propose the concept of variable-lag Granger causality,  \emph{VL-Granger causality} for short, which generalizes the Granger causal relation of Definition~\ref{def:GC}  in a way that addresses the fixed-lag limitation. We demonstrate the application of the new causality relation for a specific application of inferring initiators and followers of collective behavior.

 \begin{definition}[Alignment of time series]
 \label{def:AlignSeq}
 An alignment between two time series $X$ and $Y$ is a sequence of pairs of indices $(t_i,t_j)$, aligning $X(t_i)$ to $Y(t_j)$, such that for any two pairs in the alignment $(t_{i},t_{j})$ and $(t'_{i},t'_{j})$, if $t_{i}< t'_{i}$ then $t_{j} < t'_{j}$ (non-crossing condition). The alignment defines a sequence of delays $P=(\Delta_1,\dots,\Delta_t,\dots)$, where $\Delta_t\in\mathbb{Z}$ and $X(t-\Delta_t)$ aligns to $Y(t )$.
 \end{definition}

\begin{definition}[VL-Granger causal relation]
\label{def:ArbCausalR}
Let $X$ and $Y$ be time series, and $\delta_{max}\in\mathbb{N}$ be a maximum time lag (this is an upper bound on the time lag between any two pairs of time series values to be considered as causal). We define residual $r^*_{YX}$ of the regression:
\begin{equation}\label{eq3}
	r^*_{YX}(t) = Y(t)- \sum_{i=1}^{\delta_{max}} (a_i Y(t-i) +  b_i X(t-i) + c_i X^*(t-i)).
\end{equation}
Here $X^*(t-i)=X(t-i+1-\Delta_{t-i+1} )$, where $\Delta_{t} > 0$ is a time delay constant in  the optimal alignment sequence $P^*$ of $X$ and $Y$ that minimizes the residual of the regression.  The constants $a_i,b_i$, and $c_i$ optimally minimize the residuals $r_Y$, $r_{YX}$, and $r^*_{YX}$, respectively. The terms $b_i$ and $c_i$ can be combined but we keep them separate to clearly denote the difference between the original and proposed VL-Granger causality.
We say that $X$ VL-Granger-causes $Y$ if the variance of $r^*_{YX}$ is less than  the variances of both $r_Y$ and $r_{YX}$.
\end{definition} 

In order to make Definition~\ref{def:ArbCausalR} fully operational for this more general case (and to find the optimal constants values), we need a similarity function between two sequences which will define the optimal alignment. We propose such a similarity-based approach in Definition~\ref{def:VLemulation}.  Before defining this approach, we show that VL-Granger causality is the proper generalization of the traditional operational definition of Granger causality stated in Definition~\ref{def:GC}.
Clearly, assuming that all delays are less than $\delta_{max}$, if all the delays are constant, then $r^*_{YX}(t)=r_{YX}(t)$.  
\begin{proposition}
\label{prop:VLGrC1}
Let $X$ and $Y$ be time series and $P$ be their alignment sequence. If   $\forall t,\Delta_t=\Delta$, then $r^*_{YX}(t)=r_{YX}(t)$.
\end{proposition}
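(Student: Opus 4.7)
The plan is to substitute the constant-delay hypothesis directly into the defining formula for $X^*$ and show that the VL-Granger regression collapses into the Granger regression once the shifted terms are absorbed into the existing $X$-lag coefficients. Under $\Delta_{t-i+1}=\Delta$, the expression $X^*(t-i)=X(t-i+1-\Delta_{t-i+1})$ simplifies to $X^*(t-i)=X(t-(i+\Delta-1))$, which is simply a rigid shift of $X$ by $\Delta-1$ positions, i.e., a value of $X$ at an integer lag.

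Next, I would substitute this back into the defining equation for $r^*_{YX}$ and re-index the $c$-sum by $j=i+\Delta-1$, rewriting the VL-Granger residual as
\begin{equation*}
r^*_{YX}(t) = Y(t) - \sum_{i=1}^{\delta_{max}} a_i Y(t-i) - \sum_{i=1}^{\delta_{max}} b_i X(t-i) - \sum_{j=\Delta}^{\delta_{max}+\Delta-1} c_{j-\Delta+1}\, X(t-j).
\end{equation*}
The last two sums are both linear combinations of lagged values of the same series $X$, so the $c$ terms are linearly redundant with the $b$ terms up to a lag shift. Choosing $c_i\equiv 0$ together with the Granger-optimal $(a_i,b_i)$ is a feasible assignment inside the VL-Granger optimization whose value is exactly $r_{YX}(t)$; since the Granger objective sits inside the VL-Granger one as the $c\equiv 0$ restriction, the two residuals agree pointwise at their respective optima.

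The main obstacle is that when $\Delta>1$ the shifted sum nominally reaches $X$-lags $\delta_{max}+1,\dots,\delta_{max}+\Delta-1$ which lie outside the Granger predictor set, so VL-Granger could in principle strictly dominate. The resolution uses the standing bound $\Delta\le\delta_{max}$ (implicit because all delays are bounded by $\delta_{max}$) together with the fact that $X^*$ is literally the same time series as $X$: whatever contribution the extended $c$-lags would make can equivalently be absorbed into the existing $b_i$ coefficients after relabeling, so no strict improvement beyond $r_{YX}(t)$ is possible. Once this absorption step is written out carefully, the pointwise equality $r^*_{YX}(t)=r_{YX}(t)$ follows immediately from direct substitution.
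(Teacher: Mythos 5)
Your opening move --- substituting $\Delta_{t-i+1}=\Delta$ to get $X^*(t-i)=X\bigl(t-(i+\Delta-1)\bigr)$ and observing that the $c$-terms become fixed lags of $X$ --- is exactly the reasoning the paper relies on; in fact the paper offers no proof of this proposition beyond the preceding sentence (``Clearly, assuming that all delays are less than $\delta_{max}$, if all the delays are constant, then $r^*_{YX}(t)=r_{YX}(t)$''), so your first paragraph is the natural formalization of its intent.

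The problem is your resolution of the obstacle you correctly identify. When $\Delta>1$ the re-indexed sum involves $X(t-j)$ for $j=\delta_{max}+1,\dots,\delta_{max}+\Delta-1$, and these are \emph{new regressors}: they are values of $X$ strictly older than any $X(t-i)$ with $i\le\delta_{max}$, so they cannot be ``absorbed into the existing $b_i$ coefficients after relabeling.'' The fact that $X^*$ is literally the same time series as $X$ is beside the point --- what matters is that the regression now has access to lagged values outside the original predictor window, and for a generic time series these are linearly independent of the in-window lags, so the optimal $r^*_{YX}$ can be strictly smaller than $r_{YX}$. Your argument therefore establishes only $VAR(r^*_{YX})\leq VAR(r_{YX})$, via the feasible point $c\equiv 0$, not the claimed pointwise equality. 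The equality is clean only when $\Delta=1$ (where $X^*=X$ and the $c_i$ genuinely merge with the $b_i$), or if one reads the proposition as saying merely that the constant-lag VL model collapses to a fixed-lag Granger model (with maximum lag $\delta_{max}+\Delta-1$). To be fair, the paper never addresses this boundary issue either; but your write-up asserts an absorption step that is false as stated, rather than flagging the issue as a limitation of the statement itself.
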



We must also show that the variance of $r^*_{YX}(t)$ is no greater than the variance of $r_{YX}(t)$.
\begin{proposition}
\label{prop:VLGrC2}
Let $X$ and $Y$ be time series, $P=(\Delta_1, \dots,\Delta_t, \dots)$ be their alignment sequence such that $Y(t)=X(t-\Delta_t)$. If  $\exists \Delta_t,\Delta_{t'}\in P$,  such that $\Delta_t \neq \Delta_{t'}$ and  $\forall t, X(t)\neq X(t-1)$, then   $VAR(r^*_{YX}) < VAR(r_{YX} )$.
 \end{proposition}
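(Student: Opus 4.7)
The plan is to isolate two simple facts: the VL regression is a strict generalization of the fixed-lag regression, and under the hypotheses it attains a perfect fit that the fixed-lag model provably cannot match.

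First, observe that the fixed-lag regression in Equation~(\ref{eq2}) is nested inside the VL regression in Equation~(\ref{eq3}): setting $c_i = 0$ for every $i$ in (\ref{eq3}) recovers exactly the fixed-lag residual $r_{YX}$. Since both $r_{YX}$ and $r^*_{YX}$ are obtained by ordinary least squares over their respective coefficient sets, the richer model's optimum satisfies $VAR(r^*_{YX}) \leq VAR(r_{YX})$ automatically. The remaining task is to upgrade this weak inequality to a strict one.

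Second, I would show that the VL model achieves the trivial residual $r^*_{YX} \equiv 0$ in this setting. By the definition $X^*(t-i)=X(t-i+1-\Delta_{t-i+1})$, the $i=1$ term gives $X^*(t-1)=X(t-\Delta_t)$, which equals $Y(t)$ by the proposition's hypothesis. Therefore, choosing $c_1=1$, all other $c_i=0$, and $a_i=b_i=0$ for every $i$ produces $r^*_{YX}(t) \equiv 0$. Since OLS minimizes the sum of squared residuals, the optimum can do no worse, so $VAR(r^*_{YX})=0$.

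The final step, and the main obstacle, is showing $VAR(r_{YX}) > 0$, i.e.\ that no fixed choice of $(a_i,b_i)$ can drive the fixed-lag residuals identically to zero when the lag sequence is non-constant. Suppose otherwise; substituting $Y(s)=X(s-\Delta_s)$ reduces the hypothetical perfect fit to the identity
\[
X(t-\Delta_t) \;=\; \sum_{i=1}^{\delta_{max}} \bigl( a_i\, X(t-i-\Delta_{t-i}) + b_i\, X(t-i) \bigr) \qquad \text{for every } t,
\]
with one fixed coefficient vector. Pick two indices $t,t'$ with $\Delta_t \neq \Delta_{t'}$; both identities must hold with the same weights, yet their left-hand sides are shifted versions of $X$ with different lags. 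The hypothesis $X(s)\neq X(s-1)$ for all $s$ prevents $X$ from being locally constant, so a single fixed linear combination of $X$'s past values cannot simultaneously reproduce $X(t-\Delta_t)$ and $X(t'-\Delta_{t'})$. Subtracting the two identities and using non-constancy at a carefully chosen index yields a contradiction, giving $VAR(r_{YX})>0$ and hence the strict inequality. The delicate piece is translating the pointwise non-constancy assumption into a global statement strong enough to rule out \emph{every} fixed linear combination of $X$ and $Y$ lags; I expect to handle this by choosing $t,t'$ so that the two fixed-lag expansions involve overlapping $X$-indices whose forced equality would directly violate $X(s)\neq X(s-1)$.
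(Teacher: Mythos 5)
Your overall strategy mirrors the paper's: exhibit coefficients that make $r^*_{YX}$ identically zero, then argue the fixed-lag regression cannot achieve zero residual. Your first two steps are correct and in fact slightly more careful than the paper's own write-up (the paper says to set $c_i=1$ for all $i$, which would subtract a sum of several aligned terms; your choice $c_1=1$ and $c_i=0$ for $i>1$ is the one that actually yields $r^*_{YX}\equiv 0$). The nesting observation giving the weak inequality $VAR(r^*_{YX})\le VAR(r_{YX})$ is also fine.

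The gap is the final step, which you correctly identify as the main obstacle but do not close. ``Subtracting the two identities and using non-constancy at a carefully chosen index'' is a plan, not an argument: the hypothesis $\forall t,\ X(t)\neq X(t-1)$ only forbids equality of \emph{consecutive} values of $X$; it does not forbid lagged copies of $X$ from satisfying a fixed linear relation, so it is not clear that your subtraction ever produces a forced equality of consecutive values that would contradict that hypothesis. The paper closes this step by a concrete two-point obstruction: it takes a configuration with $\Delta_{t+1}=\Delta_t+1$, so that $Y(t)=Y(t+1)=X(t-\Delta_t)$ while the neighbouring values of $X$ all differ; a regression with time-invariant coefficients that matches $Y(t)$ exactly through lag $\Delta_t$ then necessarily misses $Y(t+1)$, and vice versa, so $r_{YX}(t)+r_{YX}(t+1)>0$. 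Note that the paper's own argument implicitly restricts the fixed-lag regression to ``selecting a single lag'' rather than an arbitrary linear combination, so it shares the weakness you flagged; but it at least commits to a specific pair of time points where the perfect fit fails, which your proposal stops short of doing. To complete your proof you would need either to reproduce that two-point argument under the same implicit restriction, or to strengthen the hypotheses so that general fixed linear combinations of the lagged series are genuinely excluded.
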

\begin{proof}
Because $Y(t)=X(t-\Delta_t)$, by setting $a_i=0,b_i=0,c_i=1$ for all $i$, we have  $r^*_{YX}=0$. In contrast, suppose $\Delta_{t+1}=\Delta_t+1$ and $X(t-\Delta_t-1)\neq X(t-\Delta_t) \neq X(t-\Delta_t+1)$, so $Y(t)=Y(t+1)=X(t-\Delta_t)$. Because  $a_i,b_i$ must be constant for all time step $t$ to compute $r_{YX}(t)$, at time $t$, the regression must choose to match either 1) $Y(t) - X(t-\Delta_t)=0$ and $Y(t+1) - X(t+1-\Delta_t)\neq 0$ or 2) $Y(t) - X(t-\Delta_{t+1})\neq 0$ and $Y(t+1) - X(t+1-\Delta_{t+1})=0$. Both 1) and 2) options make  $r_{YX}(t) +  r_{YX}(t+1) >0$. Hence, $VAR(r^*_{YX}) < VAR(r_{YX} )$.
\end{proof}
According to Propositions \ref{prop:VLGrC1} and \ref{prop:VLGrC2}, VL-Granger causality is the generalization of the Def.~\ref{def:GC} and always has lower or equal variance.


Of a particular interest is the case when there is an explicit similarity relation defined over the domain of the input time series. The underlying alignment of  VL-Granger causality then should incorporate that similarity measure and the methods for inferring the optimal alignment for the given similarity measure.

\begin{definition}[Variable-lag emulation]
\label{def:VLemulation}
Let $\mathcal{U}$ be a set of time series, $X,Y\in \mathcal{U}$, and $\mathrm{sim}: \mathcal{U}\times \mathcal{U} \to [0,1]$ be a similarity measure between two time series.

For a threshold $\sigma \in (0,1]$, if there exists a sequence of numbers $P=(\Delta_1,\dots,\Delta_t,\dots)$ s.t. $\mathrm{sim}(\tilde{X},Y) \geq \sigma$ when $\tilde{X}(t)=X(t-\Delta_t)$, then we use the following notation: 
\squishlist 
  \item if $\forall \Delta_t\in P, \:\Delta_t \geq 0$ , then $Y$ emulates $X$, denoted by  ${X \preceq Y}$,
  \item if $\forall \Delta_t\in P,  \:\Delta_t \leq  0$ , then $X$ emulates $Y$, denoted by ${Y \preceq X}$,
  \item if  ${X \preceq Y}$ and ${Y \preceq X}$, then ${Y \equiv X}$.
\squishend
We denote ${X \prec Y}$ if ${X \preceq Y}$ and $\exists \Delta_t\in P, \Delta_t > 0$.
\label{def:follr}
\end{definition}

Note, here the $\mbox{sim}$ similarity function does not have to be a distance function that obeys, among others, a triangle inequality. It can be any function that quantitatively compares the two time series. For example, it may be that when one time series increases the other decreases. We provide a more concrete and realistic example in the application setting below.

Adding this similarity measure to Definition~\ref{def:ArbCausalR} allows us to instantiate the notion of the optimal alignment $P^*$ as the one that maximizes the similarity between $X$ and $Y$: 
\begin{equation}
\label{eq:Plag}
    P^*=\argmax_P \mathrm{sim}(\tilde{X},Y),
\end{equation}
where  $\tilde{X}(t)=X(t-\Delta_t)$ for any given $P$ and  $\Delta_t \in P$.  With that addition, if $X\prec Y$, then $X$ VL-Granger-causes $Y$. This allows us to operationalize VL-Granger causality by checking for variable-lag emulation, as we describe in the next section.

\subsection{Example application: Initiators and followers}

In this section, we demonstrate an application of the VL-Granger causal relation to finding initiators of collective behavior. The Variable-lag emulation concept corresponds to a relation of following  in the leadership literature~\cite{FLICAtkdd}. That is,  $X\prec Y$ if $Y$ is a {\em follower} of $X$.  We are interested in the phenomenon of group convergence to a  consensus behavior and answering the question of which subset of individuals, if any, initiated that collective consensus behavior.
With that in mind,  we now define the concept of an initiator and provide a set of subsidiary definitions that allow us to formally show  (in Proposition~\ref{prop:LFprop}) that initiators of  collective behavior are indeed the time series that VL-Granger-cause the collective pattern in the set of the time series. In order to do this, we generalize our two-time series definitions to the case of multiple time series by defining the notion of an aggregate time series, which is consistent with previous Granger causality generalizations to multiple time series~\cite{siggiridou2016granger,doi:10.1098/rsta.2011.0613,chen2004analyzing}.

\begin{definition} [Initiators] Let $\mathcal{U}=\{U_{1},\dots,U_{n}\}$ be a set of time series. We say that $\mathcal{X}\subseteq\mathcal{ U}$ is a set of initiators if $\forall U\in\mathcal{U\setminus\mathcal{X}}$,  $\: \exists X \in \mathcal{X}$,  $\: s.t. X \prec U$, and, conversely,  $\forall X\in  \mathcal{X} \: \exists U\in \mathcal{U\setminus\mathcal{X}},  \:  s.t.  X \prec U$. That is, every time series follows some initiator and every initiator has at least one follower.
\end{definition}

Given a set of time series $\mathcal{U}=\{U_{1},\dots,U_{n}\}$ , and a set of time series $\mathcal{X}\subseteq\mathcal{U}$, we can define an aggregate time series as a time series of means at each step:

\begin{equation}
agg(\mathcal{X})=\left({\frac{1}{|\mathcal{X}|}\sum_{U\in\mathcal{X}}U(0),\dots,\frac{1}{|\mathcal{X}|}\sum_{U\in\mathcal{X}}U(t),\dots}\right)
\end{equation}

In order to identify the state of reaching a collective consensus of a time series, while allowing for some noise, we adopt the concept of  $\epsilon$-convergence  from~\cite{doi:10.1137/100791671}.

\begin{definition} [$\epsilon$-convergence] Let $Q$ and $U$ be time series, $dist:\mathbb{R}^2\times [0,1]$ be a distance function,  and $0<\epsilon\leq1/2$.  If for all time $t\in [t_{0},t_{1}], \: dist(Q(t),U(t))\leq\epsilon$, then $Q$ and $U$ $\epsilon$-converge toward each other in the interval $[t_0,t_1]$. If $t_1=\infty$ then we  say that $Q$ and $U$ $\epsilon$-converge at time $t_0$. 
\end{definition}


\begin{definition} [$\epsilon$-convergence coordination set] Given a set of time series $\mathcal{U}=\{U_{1},\dots,U_{n}\}$, if all time series in $\mathcal{U}$ $\epsilon$-converge toward $agg(\mathcal{U})$, then we say that the set $\mathcal{U}$ is an $\epsilon$-convergence coordination set.
\end{definition}

We are finally ready to state the main connection between initiation of collective behavior and VL-Granger causality.

\begin{proposition}
\label{prop:LFprop}
Let  $dist:\mathbb{R}^2\times [0,1]$ be a distance function, $\mathcal{U}$ be a set of time series, and  $\mathcal{X}\subseteq  \mathcal{U}$ be a set of initiators, which is an $\epsilon$-convergence coordination set converging towards $agg(\mathcal{X})$ in the interval $[t_0,t_1]$. For any $U, U'\in \mathcal{U}$ of length $T$, let  $$\mathrm{sim}(U,U')=\frac{\sum_t 1- dist(U(t),U'(t))}{T}. $$ 
If  for any  $U, U' \in \mathcal{U}$ their similarity $\mathrm{sim}(U,U')\geq 1-\epsilon$ in the interval $[t_0,t_1]$, then $agg(\mathcal{X})$  VL-Granger-causes $agg(\mathcal{U}\setminus\mathcal{X})$ in that interval.
\end{proposition}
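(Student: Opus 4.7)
The plan is to reduce the proposition to showing the variable-lag emulation relation $agg(\mathcal{X})\prec agg(\mathcal{U}\setminus\mathcal{X})$ on the convergence interval $[t_0,t_1]$ and then invoke the operational implication stated just after equation~\eqref{eq:Plag}, namely that $X\prec Y$ entails that $X$ VL-Granger-causes $Y$. After this reduction, it remains to exhibit a delay sequence $P=(\Delta_1,\dots,\Delta_t,\dots)$ with all $\Delta_t\geq 0$, at least one $\Delta_t>0$, together with a similarity bound $\mathrm{sim}(\widetilde{agg(\mathcal{X})},\,agg(\mathcal{U}\setminus\mathcal{X}))\geq\sigma$ for some threshold $\sigma\in(0,1]$, where $\widetilde{agg(\mathcal{X})}(t)=agg(\mathcal{X})(t-\Delta_t)$.

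The similarity bound would come essentially for free from the hypotheses. Because $\mathrm{sim}(U,U')=\tfrac{1}{T}\sum_t(1-dist(U(t),U'(t)))$, the assumption $\mathrm{sim}(U,U')\geq 1-\epsilon$ on $[t_0,t_1]$ is equivalent to an average pointwise distance of at most $\epsilon$ between every pair in $\mathcal{U}$. Treating $agg$ as a pointwise mean and using convexity of $dist$ in each argument, I would transfer those pairwise bounds to the aggregates to obtain $\mathrm{sim}(agg(\mathcal{X}),agg(\mathcal{U}\setminus\mathcal{X}))\geq 1-\epsilon$ on the interval; since $\epsilon\leq 1/2$, the resulting threshold $\sigma=1-\epsilon$ lies in $(1/2,1]$ as required by Definition~\ref{def:VLemulation}. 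For the delay sequence itself, I would invoke the initiator hypothesis: every $U\in\mathcal{U}\setminus\mathcal{X}$ admits some $X\in\mathcal{X}$ with $X\prec U$, supplying per-pair non-negative delay sequences with at least one strictly positive entry. Combining these into a single alignment $P$ between $agg(\mathcal{X})$ and $agg(\mathcal{U}\setminus\mathcal{X})$ — either by taking the optimal $P^*$ of equation~\eqref{eq:Plag} and checking that it inherits non-negativity plus strict positivity from the initiator structure, or by a direct merge respecting the non-crossing condition of Definition~\ref{def:AlignSeq} — yields $agg(\mathcal{X})\prec agg(\mathcal{U}\setminus\mathcal{X})$, and the operational implication finishes the argument.

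The main obstacle I anticipate is the delay-sequence construction: merging the per-follower delay sequences into a single non-crossing alignment for the aggregates while preserving both non-negativity and the existence of some strictly positive $\Delta_t$ is delicate when different followers use different delays at the same time $t$. My fallback would be purely existential: since the constant-zero alignment already achieves $\mathrm{sim}\geq 1-\epsilon$, the optimal alignment $P^*$ of equation~\eqref{eq:Plag} certainly does, and I can establish a strictly positive $\Delta_t$ by contradiction — if $P^*$ were identically zero, one could argue that some initiator would satisfy only $X\equiv U$ in aggregate rather than $X\prec U$, contradicting the strict-following requirement built into the initiator hypothesis. A minor boundary nuisance remains because shifting by $\Delta_t$ requires $X(t-\Delta_t)$ to be defined; I would handle it by restricting attention to $[t_0+\delta_{\max},t_1]$ and renormalizing the averaging in $\mathrm{sim}$ accordingly, which does not affect the qualitative conclusion.
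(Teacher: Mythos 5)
Your reduction --- establish $agg(\mathcal{X})\prec agg(\mathcal{U}\setminus\mathcal{X})$ and then invoke the implication following Eq.~\eqref{eq:Plag} --- is the same one the paper uses, and your transfer of the pairwise similarity bounds to the aggregates matches the paper's triangle-inequality-plus-averaging step (the paper obtains $dist(agg(\mathcal{X})(t),agg(\mathcal{U}\setminus\mathcal{X})(t))\leq 2\epsilon$). But the crux of the proposition is exhibiting an alignment $P$ with all $\Delta_t\geq 0$, at least one $\Delta_t>0$, that still meets the similarity threshold, and that is exactly the step you leave unresolved. Your primary route (merging the per-follower delay sequences into one non-crossing alignment of the aggregates) is the difficulty you yourself flag, and your fallback does not close it: Definition~\ref{def:VLemulation} defines $X\prec Y$ by the \emph{existence} of a witness $P$ with a strictly positive entry, so showing that the optimal $P^*$ of Eq.~\eqref{eq:Plag} would have to contain a positive entry is neither necessary nor, as argued, established --- $P^*\equiv 0$ for the aggregates is perfectly consistent with every individual $U$ strictly following some individual $X$, because the aggregate alignment need not inherit anything from the per-pair alignments (heterogeneous individual delays can wash out in the means).

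The paper supplies the missing idea differently: by the initiator definition each $U$ begins following only from some time $t_2>t_0$, so $agg(\mathcal{X})$ $2\epsilon$-converges to a constant line $v$ on $[t_0,t_1]$ while $agg(\mathcal{U}\setminus\mathcal{X})$ $2\epsilon$-converges to the \emph{same} line $v$ only on $[t_2,t_1]$. Because both aggregates sit within $2\epsilon$ of the same constant on the overlap, the uniform shift $\Delta_t=t_2-t_0>0$ is itself a valid nonnegative, non-crossing alignment with a strictly positive entry that preserves the similarity bound, which yields $agg(\mathcal{X})\prec agg(\mathcal{U}\setminus\mathcal{X})$ directly and avoids any merging of individual alignments. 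Without some version of this ``delayed convergence to a common constant'' observation (or another explicit witness $P$), your argument does not establish the strict relation $\prec$, only $\preceq$.
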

 \begin{proof}
  
  Suppose  $\forall X\in \mathcal{X}$, $X$ and $agg(\mathcal{X})$ $\epsilon$-converge toward each other in the interval $[t_0,t_1]$, then, by definition, for all the times $t \in [t_{0},t_1], \: dist(agg(\mathcal{X})(t),X(t))\leq\epsilon$. By the definition of initiators,  $\forall U\in\mathcal{U}\setminus\mathcal{X}, \: \exists X \in \mathcal{X}$, such that $X\prec U$, from some time $t_2 > t_0$. Thus, we have $\forall t$, s. t.  $t_2 \leq t\leq t_1, \: \: dist(X(t),U(t))\leq\epsilon$, which means $dist(agg(\mathcal{X}),U(t))\leq 2\epsilon$. Hence, we have    $\forall t, t_2\leq t \leq t_1, \:\: dist(agg(\mathcal{X})(t),agg(\mathcal{U}\setminus\mathcal{X})(t) )\leq 2\epsilon$. Since $agg(\mathcal{X})$ $2\epsilon$-converges towards some constant line $v$ in the interval $[t_0,t_1]$ and $agg(\mathcal{U}\setminus\mathcal{X})(t) )$  $2\epsilon$-converges towards the same line $v$ in the interval $[t_2, t_1]$, hence $agg(\mathcal{X}) \prec agg(\mathcal{U}\setminus\mathcal{X})$, which means, by definition, that  $agg(\mathcal{X})$  VL-Granger-causes $agg(\mathcal{U}\setminus\mathcal{X})$.
 \end{proof}
 
We have now shown that a subset of time series are initiators of a pattern of collective behavior  of an entire set if that subset VL-Granger-causes the behavior of the set. Thus, VL-Granger causality can solve the  \clip~\cite{FLICAtkdd}, which is a problem of determining  whether a pattern of collective behavior was spurious or instigated by some subset of initiators and, if so, finding those initiators who initiate collective patterns that everyone follows.

\section{Variable-lag Transfer Entropy Causality}
In this section, we generalize our concept of VL-Granger causality to the  non-linear extension of Granger causality, \textit{Transfer Entropy}~\cite{lee2012transfer,PhysRevLett.103.238701}. Given two time series $X$ and $Y$, and a probability function $p(\cdot)$, the \textit{Transfer Entropy} from $X$ to $Y$ is defined as follows:

\begin{equation}
\label{eq:trnsEnp}
    \mathcal{T}_{X\xrightarrow{} Y} = H(Y(t)\mid Y_{t-1}^{(k)} ) - H(Y(t)\mid Y_{t-1}^{(k)},X_{t-1}^{(l)} ).
\end{equation}

Where $H(\cdot\mid\cdot)$ is a conditional entropy, $k,l$ are lag constants,  $Y_{t-1}^{(k)}=Y(t-1),\dots,Y(t-k)$, and $X_{t-1}^{(l)}=X(t-1),\dots,X(t-l)$. 

One of the most common types of entropy is Shannon entropy~\cite{6773024}, based on which the function $H(\cdot)$ is defined as

\begin{equation}
    H(X) = - \sum_{t} p(X(t))\text{log}_2 \left( p(X(t))\right).
\end{equation}
Based on this function, the Shannon transfer entropy~\cite{BEHRENDT2019100265,lee2012transfer} is:

\begin{equation}
\label{eq:shtrnsEnp}
    \mathcal{T}_{X\xrightarrow{} Y} =\sum  p(Y_{t}^{(k)},X_{t-1}^{(l)})\text{log}_2 \frac{p( Y(t)\mid Y_{t-1}^{(k)},X_{t-1}^{(l)}) }{p( Y(t)\mid Y_{t-1}^{(k)}) } .
\end{equation}

Typically, we infer whether $X$ causes $Y$ by comparing $\mathcal{T}_{X\xrightarrow{} Y}$ and $\mathcal{T}_{Y\xrightarrow{} X}$. If  $\mathcal{T}_{X\xrightarrow{} Y}>\mathcal{T}_{Y\xrightarrow{} X}$, then we state that $X$ causes $Y$. However, transfer entropy is also limited by the fixed-lag assumption. 
Equation~\ref{eq:trnsEnp} shows a comparison between $Y(t)$ and $Y_{t-1}^{(k)}$ and $X_{t-1}^{(l)}$ and no variable lags are allowed. Therefore, we formalize the \textit{Variable-lag Transfer Entropy} or VL-Transfer entropy function as below:
\begin{equation}
\label{eq:VLtrnsEnp1}
     \mathcal{T}^\text{VL}_{X\xrightarrow{} Y}(P)= H(Y(t)\mid Y_{t-1}^{(k)} ) - H(Y(t)\mid Y_{t-1}^{(k)},\tilde{X}_{t-1}^{(l)} )
\end{equation}
Where  $\tilde{X}_{t-1}^{(l)}=X(t-1-\Delta_{t-1}),\dots,X(t-l-\Delta_{t-l})$ for a given $P$,  $\Delta_t \in P$, and , $\Delta_t > 0$.

\begin{proposition}
\label{prop:VLTE}
Let $X$ and $Y$ be time series and $P$ be their alignment sequence. If   $\forall \Delta_t \in P,\Delta_t=0$, then $\mathcal{T}^\text{VL}_{X\xrightarrow{} Y}(P) = \mathcal{T}_{X\xrightarrow{} Y}$.
\end{proposition}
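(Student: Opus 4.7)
The plan is to show that this is a direct substitution statement, exactly parallel to Proposition~\ref{prop:VLGrC1} for the linear case, verifying that VL-Transfer Entropy reduces to the classical transfer entropy when no warping takes place. The only nontrivial object is the conditioning vector $\tilde{X}_{t-1}^{(l)}$ in Equation~\ref{eq:VLtrnsEnp1}; everything else on the right-hand side is shared between the two definitions.

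First, I would unroll the definition of the warped history vector,
\[
\tilde{X}_{t-1}^{(l)} = \bigl(X(t-1-\Delta_{t-1}),\, X(t-2-\Delta_{t-2}),\, \dots,\, X(t-l-\Delta_{t-l})\bigr),
\]
and substitute the hypothesis $\Delta_{t-i} = 0$ for every $i \in \{1,\dots,l\}$. This collapses the tuple to $\bigl(X(t-1), X(t-2), \dots, X(t-l)\bigr)$, which is precisely $X_{t-1}^{(l)}$ as used in Equation~\ref{eq:trnsEnp}. Consequently the two conditional entropies $H\!\bigl(Y(t)\mid Y_{t-1}^{(k)},\tilde{X}_{t-1}^{(l)}\bigr)$ and $H\!\bigl(Y(t)\mid Y_{t-1}^{(k)},X_{t-1}^{(l)}\bigr)$ coincide because they condition on exactly the same $\sigma$-algebra. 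The leading term $H\!\bigl(Y(t)\mid Y_{t-1}^{(k)}\bigr)$ is identical in both expressions, so subtracting yields $\mathcal{T}^{\text{VL}}_{X\to Y}(P) = \mathcal{T}_{X\to Y}$.

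There is no real technical obstacle; the statement is a sanity check that VL-Transfer Entropy properly generalizes transfer entropy. The only point that needs care is a slight notational tension: the sentence following Equation~\ref{eq:VLtrnsEnp1} stipulates $\Delta_t > 0$, whereas the proposition assumes $\Delta_t = 0$. The natural reading is that the zero-lag case should be interpreted as a boundary instance of the definition (or equivalently, that the proposition covers the more general claim ``if $\Delta_t$ is constant across $t$, then the VL formula agrees with the classical formula up to a fixed index shift''). I would note this convention at the start of the proof so that the substitution step is unambiguous, and then complete the argument as above.
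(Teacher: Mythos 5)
Your proposal is correct and matches the paper's own (one-line) proof: both simply substitute $\Delta_t=0$ into Eq.~\ref{eq:VLtrnsEnp1} so that $\tilde{X}_{t-1}^{(l)}$ collapses to $X_{t-1}^{(l)}$ and the two expressions coincide term by term. Your additional remarks about the conditioning sets and the $\Delta_t>0$ convention are sensible elaborations but do not change the argument.
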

\begin{proof}
By setting $\Delta_t =0$ for all $t$, the function $\mathcal{T}^\text{VL}_{X\xrightarrow{} Y}(P)$ in Eq.~\ref{eq:VLtrnsEnp1} is equal to $\mathcal{T}_{X\xrightarrow{} Y}$ in Eq.~\ref{eq:trnsEnp}.
\end{proof}

 Hence,  \textit{Variable-lag Transfer Entropy} function generalizes the transfer entropy function.  To find an appropriate $P$, we can use $P^*$ in Eq.~\ref{eq:Plag} that is a result of alignment of time series $X$ along with $Y$. The  $P^*$ in Eq.~\ref{eq:Plag} represents a sequence of time delay that matches the most similar pattern of time series $X$ with the pattern in time series $Y$ where the pattern of $X$ comes before the pattern of $Y$.






\section{VL-Granger and VL-Transfer Entropy Causality Inference}

\subsection{Variable-lag Causality Inference}
Given a target time series $Y$, a candidate causing time series $X$, a threshold $\sigma$, a significance threshold $\alpha$ (or other threshold if we do not use statistical testing), the max lag $\delta_{max}$, and the linear flag $linearFLAG$, our framework evaluates whether $X$ variable-lag causes $Y$, $X$ fixed-lag causes $Y$ or no conclusion of causation between $X$ and $Y$ using either Granger causality or Transfer Entropy, which is a non-linear extension of Granger causality. In Algorithm~\ref{algo:MainFunc}, users can set either $linearFLAG=true$ to run Granger causality or $linearFLAG=false$ for Transfer Entropy. \\

 For  $linearFLAG=true$, in Algorithm~\ref{algo:MainFunc} line 2-3,  we have a fix-lag parameter $FixLag$ that controls whether we choose to compute the normal Granger causality ($FixLag=true$) or VL-Granger causality ($FixLag=false$). For  $linearFLAG=false$, in the line 5-6, we compute Transfer Entropy if $FixLag=true$. Otherwise, we compute whether $X$ causes $Y$ w.r.t. VL-Transfer Entropy.

  We present the high level logic of the algorithm. However, the actual implementation is more efficient by removing the redundancies of the presented logic.\\

For  $linearFLAG=true$, first, we compute Granger causality (line 2 in  Algorithm~\ref{algo:MainFunc}) using a function in Section~\ref{sec:opVLGranger}. The flag $fixLagResult=true$ if $X$ Granger-causes $Y$, otherwise $fixLagResult=false$. Second, we  compute VL-Granger causality (line 3 in  Algorithm~\ref{algo:MainFunc}). The flag $VLResult=true$ if $X$ VL-Granger-causes $Y$, otherwise, $VLResult=false$. Third, in line 4 in  Algorithm~\ref{algo:MainFunc}, based on the work in~\cite{atukeren2010relationship}, we use the Bayesian Information Criteria  (BIC) to compare the residual of regressing $Y$ on $Y$ past information, $r_Y$, with the residual of regressing $Y$ on $Y$ and $X$ past information $r_{YX}$. We use $v_1 \ll v_2$ to represent that $v_1$ is less than $v_2$ with statistical significance by using some statistical test(s) or criteria. If $BIC(r_Y)\ll BIC(r_{YX})$, then we conclude that the prediction of  $Y$  using $Y,X$ past information is better than the prediction of  $Y$  using $Y$ past information alone. For this work, to determine $BIC(r_Y)\ll BIC(r_{YX})$, we use \textit{Bayesian Information Criterion difference ratio} (see Section~\ref{sec:BICDiffratio}). If $BIC(r_Y)\ll BIC(r_{YX})$, then $VLflag=true$, otherwise, $VLflag=false$. \\
  
For  $linearFLAG=false$, first, we compute Transfer Entropy causality (line 5 in  Algorithm~\ref{algo:MainFunc}) using a function in Section~\ref{sec:opVLTE}. The flag $fixLagResult=true$ if $X$ causes $Y$ in Transfer Entropy, otherwise, $fixLagResult=false$. Second, we  compute VL-Transfer-Entropy causality (line 6 in  Algorithm~\ref{algo:MainFunc}). The flag $VLResult=true$ if $X$ causes $Y$ in VL-Transfer Entropy, otherwise, $VLResult=false$. To determine whether $X$ causes $Y$ in Transfer Entropy, we use the \textit{Transfer Entropy Ratio} (see Section~\ref{sec:VLTEmethod}).

In line 7, if the normal Transfer Entropy ratio is less than the VL-Transfer Entropy ratio, then $VLflag=true$, otherwise, $VLflag=false$. \\

Note that $VLflag=true$ when the result of variable-lag version is better than the fixed-lag version in both Granger causality and Transfer Entropy.

Using the results of  $fixLagResult$, $VLResult$, and $VLflag$, we proceed to report the conclusion of causal relation between $X$ and $Y$ w.r.t.  the following four conditions.
 
 \squishlist
\item {\bf If both $fixLagResult$ and $VLResult$ are true}, then we determine $VLflag$. If $VLflag=true$, then we conclude that $X$ causes $Y$ with variable lags, otherwise, $X$ causes $Y$ with a fix lag (line 9 in  Algorithm~\ref{algo:MainFunc}).
\item {\bf If $fixLagResult$ is true but $VLResult$ is false}, then we conclude that $X$ causes $Y$ with a fix lag (line 10 in Algorithm~\ref{algo:MainFunc}).
\item {\bf If $fixLagResult$ is false but $VLResult$ is true}, then we conclude that $X$ causes $Y$ with variable lags (line 11 in Algorithm~\ref{algo:MainFunc}).

\item {\bf If both $fixLagResult$ and $VLResult$ are false}, then we cannot conclude whether $X$ causes $Y$ (line 12 in Algorithm~\ref{algo:MainFunc}).
\squishend

\setlength{\intextsep}{0pt}
\IncMargin{1em}
\begin{algorithm2e}
\caption{Time-lag test function}
\label{algo:MainFunc}
\SetKwInOut{Input}{input}\SetKwInOut{Output}{output}
\Input{ $X,Y$, $\sigma$, $\gamma$  (or $\alpha$), $\delta_{max}$, $linearFLAG$ } 
\Output{$XCausesY$}
\begin{small}
\SetAlgoLined
\nl \uIf{$linearFLAG=true$}
{
    \nl ($fixLagResult$,$r_{Y},r_{YX}$)=VLGrangerFunc($X,Y$, $\sigma$, $\gamma$, $\delta_{max}$, $FixLag = true$)\;
    \nl ($VLResult$,$r_{Y},r_{DTW}$)= VLGrangerFunc($X,Y$, $\sigma$, $\gamma$, $\delta_{max}$, $FixLag = false$)\;
    \nl $VLflag= \big(BIC(r_{DTW}) \ll min(BIC(r_{YX}),BIC(r_{Y}) ) \big)$\;
}  \Else{
\nl ($fixLagResult$,$\mathcal{T}_{X\xrightarrow{} Y},\mathcal{T}_{Y\xrightarrow{} X}$)=VLTransferEFunc($X,Y$, $\delta_{max}$, $FixLag = true$)\;
\nl ($VLResult$,$\mathcal{T}^\text{VL}_{X\xrightarrow{} Y},\mathcal{T}^\text{VL}_{Y\xrightarrow{} X}$)=VLTransferEFunc($Y,X$, $\delta_{max}$, $FixLag = false$)\;

\nl $VLflag=\mathcal{T}(X,Y)_\textrm{ratio} < \mathcal{T}^\text{VL}(X,Y)_\textrm{ratio}$\;

}
\nl \uIf{$fixLagResult=true$ }
{
                 \uIf{$VLResult=true$}
                {
                
               \nl   \uIf{$VLflag = true$}
                {
                $XCausesY$ = TRUE-VARIABLE\; 
                }
                 \Else{
                $XCausesY$ = TRUE-FIXED\; 
                }              
                
                }
                \Else{
                \nl  $XCausesY$ = TRUE-FIXED\; 
                }
} 
 \Else{
  				\uIf{$VLResult=true$}
                {
                 \nl  $XCausesY$ = TRUE-VARIABLE\; 
                }
                 \Else{
                \nl  $XCausesY$ = NONE\; 
                }
}
\nl return $XCausesY$\;
\end{small}
\end{algorithm2e}\DecMargin{1em}

Note that we assume the maximum lag value $\delta_{\max}$ is given as an input, as it is for all definitions of both Granger causality and Transfer Entropy. For practical purposes, a value of a large fraction ({\em e.g., } half) of the length of the time series can be used. However, there is, of course, a computational trade-off between the magnitude of $\delta_{\max}$ and the time it takes to compute both Granger causality and Transfer Entropy.

\subsection{VL-Granger causality operationalization}
\label{sec:opVLGranger}
Next, we describe the details of the VL-Granger function used in Algorithm~\ref{algo:MainFunc}: line 1-2. Given two time series $X$ and $Y$, a threshold $\gamma$ (or a significance level $\alpha$ if we use F-test), the maximum possible lag $\delta_{max}$, and whether we want to check for variable or fixed lag $FixLag$, Algorithm~\ref{algo:VLGrangerCalFunc} reports whether $X$ causes $Y$ by setting $GrangerResult$ to be true or false, and by reporting on two residuals  $r_{Y}$ and $r_{YX}$.

First, we compute the residual $r_Y$ of regressing of $Y$ on $Y$'s information in the past (line 1). Then, we regress $Y(t)$ on $Y$ and $X$ past information to compute the residual $r_{YX}$ (line 2). If $BIC(r_{YX}) \ll BIC(r_{Y})$, then $X$ Granger-causes $Y$ and we set $GrangerResult=true$ (line 7). 
If $FixLag$ is true, then we report the result of typical Granger causality. Otherwise, we consider VL-Granger causality (lines 3-5) by computing the emulation relation between $X^{DTW}$ and $Y$ where $X^{DTW}$ is a version of $X$ that is reconstructed through DTW and is most similar to $Y$, captured by $DTWReconstructionFunction(X,Y)$ which we explain in Section~\ref{sec:DTWRecont}. 

Afterwards, we do the regression of $Y$ on $X^{DTW}$'s past information to compute  residual $r_{DTW}$ (line 4). Finally, we check whether $BIC(r_{DTW}) \ll BIC(r_{Y})$ (line 6-9) (see Section~\ref{sec:BICDiffratio}). If so, $X$ VL-Granger-causes $Y$. Additionally, after running $DTWReconstructionFunction(X,Y)$, we might check the condition $simValue\geq\sigma$ in order to claim that whether $X$ VL-Granger-causes $Y$ and ${X \preceq Y}$. 

In the next section, we describe the details of how to construct $X^{DTW}$ and how to estimate the emulation similarity value $simValue$.

\setlength{\intextsep}{0pt}
\IncMargin{1em}
\begin{algorithm2e}
\caption{VLGrangerFunc}
\label{algo:VLGrangerCalFunc}
\SetKwInOut{Input}{input}\SetKwInOut{Output}{output}
\Input{$X,Y$, $\delta_{max}$,  $\sigma$, $\gamma$ (or $\alpha$), $FixLag$ }
\Output{ $GrangerResult$,$r_{Y},r_{YX}$}
\begin{small}
\SetAlgoLined
 \nl Regress $Y(t)$ on $Y(t-\delta_{max}),\dots, Y(t-1)$, then compute the residual $r_{Y}(t)$\;
 
  \uIf{$FixLag$ is true}{
   \nl Regress $Y(t)$ on $Y(t-\delta_{max}),\dots, Y(t-1)$ and $X(t-\delta_{max}),\dots, X(t-1)$, then compute the residual  $r_{YX}(t)$\;
  }
 \Else{
 \nl $X^{DTW}$,$simValue$ = DTWReconstructionFunction($X,Y$) \;
 \nl  Regress $Y(t)$ on $Y(t-\delta_{max}),\dots, Y(t-1)$ and $X^{DTW}(t-\delta_{max}),\dots, X^{DTW}(t-1)$, then compute the residual  $r_{DTW}$\;
 \nl $r_{YX}=r_{DTW}$\;
 } 
 
 \nl \uIf{$BIC_1(r_{YX}) \ll BIC_0(r_{Y})$  } 
{
\nl $GrangerResult=true$
}
\nl \Else{
\nl $GrangerResult=false$ \;}
\nl return $GrangerResult$,$r_{Y},r_{YX} $\;
\end{small}
\end{algorithm2e}\DecMargin{1em}

\subsection{Dynamic Time Warping for inferring VL-Granger causality.}
\label{sec:DTWRecont}

In this work, we propose to use Dynamic Time Warping (DTW)~\cite{sakoe1978dynamic}, which is a standard distance measure between two time series.  DTW calculates the distance between two time series by aligning sufficiently similar patterns between two time series, while allowing for local stretching (see Figure~\ref{fig:LinearVSArbLag}). Thus, it is particularly well suited for calculating the variable lag alignment. 

Given time series $X$ and $Y$, Algorithm~\ref{algo:DTWReFunc} reports reconstructed time series $X^{DTW}$ based on $X$ that is most similar to $Y$, as well as the emulation similarity $simValue$ between the two series. 
First, we use $DTW(X,Y)$ to find the optimal alignment sequence $\hat{P}=(\Delta_1,\dots,\Delta_t,\dots)$ between $X$ and $Y$, as defined in Definition~\ref{def:AlignSeq}. Efficient algorithms for computing $DTW(X,Y)$ exist and they can incorporate various kernels between points~\cite{Mueen:2016:EOP:2939672.2945383,sakoe1978dynamic}. Then, we use $\hat{P}$ to construct $X^{DTW}$ where $X^{DTW}(t)=X(t-\Delta_t)$. However, we also use cross-correlation to normalize  $\Delta_t$ since DTW is sensitive to a noise of alignment (Algorithm~\ref{algo:DTWReFunc} line 3-5). 

Afterwards, we use $X^{DTW}$ to predict $Y$ instead of using only $X$ information in the past in order to infer a VL-Granger causal relation in Definition~\ref{def:ArbCausalR}. The benefit of using DTW is that it can match time points of $Y$ and $X$ with non-fixed lags (see Figure~\ref{fig:LinearVSArbLag}). Let $\hat{P}=(\Delta_1,\dots,\Delta_t,\dots)$ be the DTW optimal warping path of $X,Y$ such that for any $\Delta_t \in \hat{P}$, $Y(t)$ is most similar to $X(t-\Delta_t)$. 

In addition to finding $X^{DTW}$, $DTWReconstructionFunction$ estimates the emulation similarity $simValue$ between $X,Y$ in line 3. For that, we adopt the measure from~\cite{FLICAtkdd} below: 


\begin{equation}
	\mathrm{s}(\hat{P})=\frac{\sum_{\Delta_t \in \hat{P}}\mathrm{sign}(\Delta_t)}{|\hat{P}|},
	\label{eq:traCorr}
\end{equation}
where $0<\mathrm{s}(\hat{P})\leq 1$ if $X\preceq Y$,  $-1\leq \mathrm{s}(\hat{P})<0$ if $Y\preceq X$, otherwise zero. Since the $\mathrm{sign}(\Delta_t)$ represents whether $Y$ is similar to $X$ in the past ($\mathrm{sign}(\Delta_t) >0$) or  $X$ is similar to $Y$ in the past ($\mathrm{sign}(\Delta_t) <0$), by comparing the sign of $\mathrm{sign}(\Delta_t)$, we can infer whether $Y$ emulates $X$. The function $\mathrm{s}(\hat{P})$ computes the average sign of   $\mathrm{sign}(\Delta_t)$ for the entire time series. If $\mathrm{s}(\hat{P})$ is positive, then, on average, the number of times that $Y$ is similar to $X$ in the past is greater than the number of times that $X$ is similar to some values of $Y$ in the past.  Hence, $\mathrm{s}(\hat{P})$ can be used as a proxy to determine whether $Y$ emulates $X$ or vice versa.  We use \textit{dtw} R package~\cite{giorgino2009computing} for our DTW function. For more details regarding DTW, please see Appendix~\ref{apdx:DTW}.

\setlength{\intextsep}{0pt}
\IncMargin{1em}
\begin{algorithm2e}
\caption{ DTWReconstructionFunction}
\label{algo:DTWReFunc}
\SetKwInOut{Input}{input}\SetKwInOut{Output}{output}
\Input{ $X,Y$}
\Output{ $X^{DTW}$, $simValue$}
\begin{small}
\SetAlgoLined
\nl $\hat{P}=(\Delta_1,\dots,\Delta_t,\dots)$ = DTWFunction( $X,Y$)  \tcp{Getting the warping path from Algorithm~\ref{algo:DTWDistFunc}}
\nl $\hat{P}_0=(\Delta_0,\dots,\Delta_0,\dots)$=CrossCorrelation($X,Y$)\;
\nl \For{ all $t$}{
 \uIf{$DIST(X(t-\Delta_t),Y(t))<DIST(X(t-\Delta_0),Y(t))$}
 {
 \nl set  $X^{DTW}(t-1)=X(t-\Delta_t)$ and $\hat{P}^*(t)=\Delta_t$\;
 }\Else{
 \nl set  $X^{DTW}(t-1)=X(t-\Delta_0)$ and $\hat{P}^*(t)=\Delta_0$ \;
 }
}
\nl        $simValue =\mathrm{s}(\hat{P}^*)$ \;
Return $X^{DTW}$, $simValue$\;
\end{small}
\end{algorithm2e}\DecMargin{1em}

\subsection{Bayesian Information Criterion difference ratio for VL-Granger causality}
\label{sec:BICDiffratio}

Given $RRSS$ is a restricted residual sum of squares from a regression of $Y$ on $Y$ past, and $T$ is a length of time series, the BIC of null model can be defined below. 
\label{sec:BICdiffratio}
\begin{equation}
	BIC_0(r_{Y})=\frac{RRSS(r_{Y})}{T}T^{(\delta_{max}+1)/T},
	\label{eq:BICrest}
\end{equation}

For unrestricted model, given $URSS$ is an unrestricted residual sum of squares from a regression of $Y$ on $Y,X$ past, and $T$ is a length of time series, the BIC of alternative model can be defined below. 
\begin{equation}
	BIC_1(r_{YX})=\frac{URSS(r_{YX})}{T}T^{(2\delta_{max}+1)/T},
	\label{eq:BICunrest}
\end{equation}
We use the \textit{Bayesian Information Criterion difference ratio} as a main criteria to determine whether $X$ Granger-causes $Y$ or determining $BIC_1(r_{YX}) \ll BIC_0(r_{Y})$ in Algorithm~\ref{algo:VLGrangerCalFunc} line 6, which can be defined below:
\begin{equation}
	\mathrm{r}(BIC_0(r_{Y}),BIC_1(r_{YX}))=\frac{BIC_0(r_{Y})-BIC_1(r_{YX})}{BIC_0(r_{Y})}.
	\label{eq:BICdiffRatio}
\end{equation}

The ratio $\mathrm{r}(\cdot,\cdot)$ is within $[-\infty,1]$. The closer $\mathrm{r}(\cdot,\cdot)$  to $1$, the better the performance of alternative model is compared to the null model. We can set the threshold $\gamma\in[0,1]$ to determine whether $X$ Granger-causes $Y$, i.e. $\mathrm{r}(BIC_0(r_{Y}),BIC_1(r_{YX}))\geq \gamma$ implies $X$ Granger-causes $Y$. Other options of determining $X$ Granger-causes $Y$ is to use F-test or the emulation similarity $simValue$.

\subsection{VL-Transfer-Entropy causality operationalization}
\label{sec:opVLTE}
Given time series $X,Y$, and the maximum possible lag $\delta_{max}$, and whether we want to check for variable or fixed lag $FixLag$, Algorithm~\ref{algo:VLTransferEFunc} reports whether $X$ causes $Y$ by setting $TransEResult$ to be true or false, and by reporting on two transfer entropy values: $\mathcal{T}_{X\xrightarrow{} Y}$ and $\mathcal{T}_{Y\xrightarrow{} X}$.

First, if $FixLag$ is true, then we compute the transfer entropy (line 1) using RTransferEntropy($X,Y$)~\cite{BEHRENDT2019100265}. If $FixLag$ is false, then, we reconstructed $X^{DTW}$ using $DTWReconstructionFunction(X,Y)$ in Section~\ref{sec:DTWRecont} (line 2). We compute the VL-transfer entropy (line 3) using RTransferEntropy($X^{DTW},Y$).

If the ratio $\mathcal{T}(X,Y)_\textrm{ratio}>1$ (Section~\ref{sec:VLTEmethod}), then $X$ causes $Y$ and we set $TransEResult=true$ (line 5), otherwise, $TransEResult=false$ (line 6). 

\setlength{\intextsep}{0pt}
\IncMargin{1em}
\begin{algorithm2e}
\caption{VLTransferEFunc}
\label{algo:VLTransferEFunc}
\SetKwInOut{Input}{input}\SetKwInOut{Output}{output}
\Input{$X,Y$, $\delta_{max}$, $FixLag$ }
\Output{ $TransEResult$,$\mathcal{T}_{X\xrightarrow{} Y}$,$\mathcal{T}_{Y\xrightarrow{} X}$}
\begin{small}
\SetAlgoLined
  \uIf{$FixLag$ is true}{
   \nl $\mathcal{T}_{X\xrightarrow{} Y}$,$\mathcal{T}_{Y\xrightarrow{} X}$ = RTransferEntropy($X,Y$)~\cite{BEHRENDT2019100265}\;
  }
 \Else{
 \nl $X^{DTW}$,$simValue$ = DTWReconstructionFunction($X,Y$) \;
 \nl $\mathcal{T}_{X\xrightarrow{} Y}$,$\mathcal{T}_{Y\xrightarrow{} X}$ = RTransferEntropy($X^{DTW},Y$)~\cite{BEHRENDT2019100265}\;
 } 
 
 \nl \uIf{$\mathcal{T}(X,Y)_\textrm{ratio}>1$  } 
{
\nl $TransEResult=true$
}
 \Else{
\nl $TransEResult=false$ \;}
\nl return $TransEResult$,$\mathcal{T}_{X\xrightarrow{} Y}$,$\mathcal{T}_{Y\xrightarrow{} X}$\;
\end{small}
\end{algorithm2e}\DecMargin{1em}

Additionally, the work by Dimpfl and Peter (2013) ~\cite{dimpfl2013using} proposed the approach to perform the Markov block bootstrap on transfer entropy so that the results can be calculated the p-value of significance tests. The approach preserves dependency within time series while performing bootstrapping. We also integrated this option of bootstrapping analysis in our framework. 

\subsection{Transfer Entropy Ratio}
\label{sec:VLTEmethod}
To determine whether $X$ Transfer-Entropy-causes $Y$, we can use the \textit{Transfer Entropy Ratio} below.

\begin{equation}
\label{eq:trnsEnpRatio}
    \mathcal{T}(X,Y)_\textrm{ratio} =\frac{\mathcal{T}_{X\xrightarrow{} Y}}{\mathcal{T}_{Y\xrightarrow{} X}} .
\end{equation}

The \textit{VL-Transfer Entropy Ratio} is defined below:
\begin{equation}
\label{eq:VLtrnsEnpRatio}
    \mathcal{T}^\text{VL}(X,Y)_\textrm{ratio} =\frac{\mathcal{T}^\text{VL}_{X\xrightarrow{} Y}}{\mathcal{T}^\text{VL}_{Y\xrightarrow{} X}} .
\end{equation}
Where $\mathcal{T}^\text{VL}_{X\xrightarrow{} Y}$ and $\mathcal{T}^\text{VL}_{Y\xrightarrow{} X}$ are Transfer Entropy values from VL-Transfer Entropy (Algorithm~\ref{algo:VLTransferEFunc} line 3).
$\mathcal{T}(X,Y)_\textrm{ratio}$ greater than $1$ implies that $X$ causes $Y$ in Transfer Entropy. The higher $\mathcal{T}(X,Y)_\textrm{ratio}$, the higher the strength of $X$ causing $Y$. The same is true for $\mathcal{T}^\text{VL}(X,Y)_\textrm{ratio}$. 
\section{Experiments}
We measured our framework performance on the task of inferring causal relations using both simulated and real-world datasets. The notations and symbols we use in this section are in Table~\ref{tb:symbloTable}.

\subsection{Experimental setup}
\label{sec:ExpSet}

\begin{table}
\caption{Notations and symbols }
\label{tb:symbloTable}
\begin{small}
\begin{tabular}{|l|p{4in}|}
\hline
{\bf Term and notation} & {\bf Description}                                                                                                                                \\ \hline\hline
$T$               & Length of time series.                                                                                                                      \\ \hline
$\gamma$          & Threshold of BIC difference ratio in Section~\ref{sec:BICDiffratio}. \\ \hline

$\delta_{max}$    & Parameter of the maximum length of time delay                                                                                               \\ \hline
BIC               & Bayesian Information Criterion, which is used as a proxy  \newline to compare the residuals of regressions of two time series.                            \\ \hline
$A \prec B$       & $B$ emulates $A$. \\ \hline
$\mathcal{N}$       & Normal distribution.                                                                                             \\ \hline
ARMA or A.       & Auto-Regressive Moving Average model.                                                                                             \\ \hline
VL-G           & Variable-lag Granger causality with BIC difference ratio:\newline $X$ causes $Y$ if  BIC difference ratio $\mathrm{r}(BIC_0(r_{Y}),BIC_1(r_{YX}))\geq \gamma$.            \\ \hline
G                 & Granger causality ~\cite{atukeren2010relationship}                                                                                                                          \\ \hline
CG                & Copula-Granger method ~\cite{liu2012sparse}       \\ \hline
SIC               &  Spectral Independence Criterion method ~\cite{shajarisales2015telling}             \\ \hline
TE               &  Transfer entropy ~\cite{BEHRENDT2019100265}           \\ \hline
VL-TE               &  Variable-lag transfer entropy         \\ \hline
TE (boots)              &  Transfer entropy~\cite{BEHRENDT2019100265} with bootstrapping~\cite{dimpfl2013using}          \\ \hline
VL-TE (boots)                &  Variable-lag transfer entropy  with bootstrapping~\cite{dimpfl2013using}         \\ \hline
\end{tabular}
\end{small}
\end{table}

We tested the performance of our method on synthetic datasets, where we explicitly embedded a variable-lag causal relation, as well as on biological datasets in the context of the application of identifying initiators of collective behavior, and on other two real-world casual datasets. 

We compared our methods, VL-Granger causality (VL-G) and VL-Transfer entropy (VL-TE), with several existing methods: Granger causality with F-test (G)~\cite{atukeren2010relationship}, Copula-Granger method (CG)~\cite{liu2012sparse}, Spectral Independence Criterion method (SIC)~\cite{shajarisales2015telling}, and transfer entropy (TE)~\cite{BEHRENDT2019100265}. 

In this paper, we explore the choice of  $\delta_{max}$ in $\{0.1T,0.2T,0.3T,0.4T\}$ for all methods to analyze the sensitivity of each method, where $T$ is the length of time series, and set $\gamma=0.5$ as default unless explicitly stated otherwise\footnote{In VL-Granger causality, the threshold $\gamma=0.5$ implies that the time series $X$ causes $Y$ if the residuals of perdition by the VL-Granger can be reduced compared against the residuals of the null model (using $Y$ past to predict $Y$) at least half. We set the $\gamma=0.5$ for a pairwise time series X because we know they have either a strong signal of causation or no causation.  }.

\subsection{Datasets}

\subsubsection{Synthetic data: pairwise level}
The main purpose of the synthetic data is to generate settings that explicitly illustrate the difference between the original Granger causality, transfer entropy methods and the proposed variable-lag approaches. We generated pairs of time series for which the fixed-lag causality methods would fail to find a relationship but the variable-lag approach would find the intended relationships. 

We generated a set of synthetic time series of 200 time steps. We generated two sets of pairs of time series $X$ and $Y$.
First, we generated $X$ either by drawing the value of each time step from a standard normal distribution  $\mathcal{N}(0,1)$ with zero mean and a variance at one ($X(t)\sim\mathcal{N}$) (normal model) or by Auto-Regressive Moving Average model (ARMA or A.) with $X(t)= 0.2X(t-1) + \epsilon_X$ where $\epsilon_X\sim\mathcal{N}(0,1)$.  

The first set we generated was of explicitly related pairs of time series $X$ and $Y$, where  $Y$ emulates $X$ with some  time lag $\Delta =5$ ($X\prec Y$). Specifically, $Y(t)= X(t-\Delta) + 0.1\epsilon_Y$ where $\epsilon_Y\sim\mathcal{N}(0,1)$.

One way to ensure lag variability is to ``turn off" the emulation for some time. For example, $Y$ remains constant between 110th and 170th time steps imitating the $X$ at 100th time step. This makes $Y$ a variable-lag follower of $X$. Figure~\ref{fig:FixVsVarLagExample} shows examples of the generated time series that has $Y$ remains constant for a while. We generated time series for each generator model 15 times.

The second set of time series pairs $X$ and $Y$ were generated independently and as a result have no causal relation. We used these pairs to ensure that our method does not infer spurious relations.  We generated time series for each generator model 15 times. 

Hence, we have 15 datasets of normal model with $X\prec Y$, 15 datasets of normal model with $X\nprec Y$, 15 datasets of ARMA model with $X\prec Y$, 15 datasets of ARMA model with $X\nprec Y$, and 15 datasets where $X$ is from normal model, and $Y$ is from ARMA model s.t.  $X\nprec Y$. In total, we have 75 datasets for the pairwise-level simulation.  See Appendix~\ref{apdx:SimCode} for the code we used to generated the datasets.

We set the significance level for both F-test and bootstrapping test of transfer entropy at $\alpha=0.05$. For the bootstrapping of transfer entropy, we set the number of bootstrap replicates as 100 times. We  considered there to be a causal relation only if $\mathrm{r}(BIC_0(r_{Y}),BIC_1(r_{YX}))\geq \gamma$ for our method.

For the task of causal prediction, we define the true positive (TP) when  the ground truth is $X\prec Y$ and a method reports that $X\prec Y$. The true negative (TN) is when both the ground truth and predicted result agree that $X\nprec Y$. The false positive (FP) is when the ground truth is $X\nprec Y$, but the method predicted that $X\prec Y$. The false negative (FN) is the ground truth is $X\prec Y$, but the method disagrees.  The accuracy is the TP and TN cases divided by the number of total pairs of time series. The true positive rate (TPR) is the number of TP cases divided by the number of TP and FN cases. The false positive rate (FPR) is the number of FP cases divided by the number of FP and TN cases. 

We report the result in the form of the receiver operating characteristic (ROC) curves. The results of methods are compared against each other using their area under a curve (AUC).  

\subsubsection{Synthetic data: group level} This experiment explores the ability of causal inference methods to retrieve {\em multiple}  causes of a time series $Y_{ij}$, which is generated from multiple time series $X_i,X_j$. Fig.~\ref{fig:SynHSiblingRes} shows the ground truth causal graph we used to generate simulated datasets. The edges represent causal directions from the cause time series (e.g. $X_1$) to the effect time series (e.g. $Y_1$). $Y_{ij}$ represents the time series generated by $agg(\{X'_i,X'_j\})$, where $X_i\prec X'_i$ and $X_j\prec X'_j$ with some fixed lag $\Delta = 5$. The task is to infer edges of this causal graph from the time series. We generated time series for each generator model 15 times. We set $\gamma=0.3$ in this experiment due to the weak signal of $X$ causes $Y$  when there are multiple causes of $Y$. There are also two generators for $X_1,X_2,X_3$: normal distribution and ARMA model.

For the task of causal graph prediction, a TP case is a case when both when both the ground truth and predicted result agree that there is a causal edge from $X_i$ to $Y_j$ in the graph. A TN case is a case when both when both the ground truth and predicted result agree that there is no causal edge from $X_i$ to $Y_j$ in the graph. A FP is a case when there is no edge in a ground truth casual graph, but a method predicted that there is the edge. A FN is a case when there is an edge from $X_i$ to $Y_j$ in a ground truth casual graph, but a method predicted that there is no edge from $X_i$ to $Y_j$. We report precision, recall, and F1 score for all methods. The precision ($prec$) is a ratio between a number of TP cases and a number of TP+FP cases. The recall ($rec$) is a ratio between a number of TP cases and a number of TP+FN cases. The F1 score $F=2*prec*rec/(prec+rec)$.

For the parameter setting, since the the time delay between causes and effects is 5 time steps for all datasets in this section, methods with the $\Delta_{max}$ parameter have $\Delta_{max}=10$.

\begin{figure}
\centering
\includegraphics[width=.3\columnwidth]{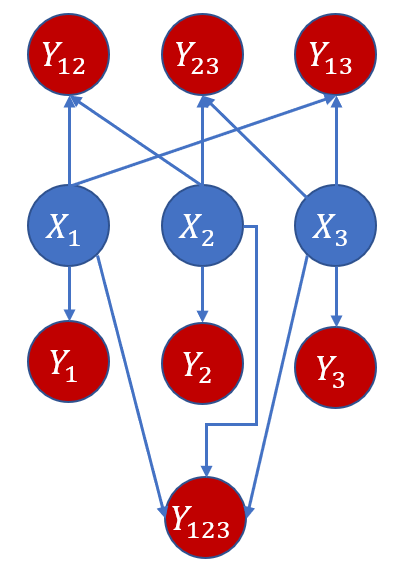}
\caption{ The causal graph where the edges represent causal directions from the cause time series (e.g. $X_1$) to the effect time series (e.g. $Y_1$). $Y_{ij}$ represents a time series generated by $agg(\{X'_i,X'_j\})$, where $X_i\prec X'_i$ with some fixed lag $\Delta$. }
\label{fig:SynHSiblingRes}
\end{figure}

\begin{figure}
\centering
\includegraphics[width=1\columnwidth]{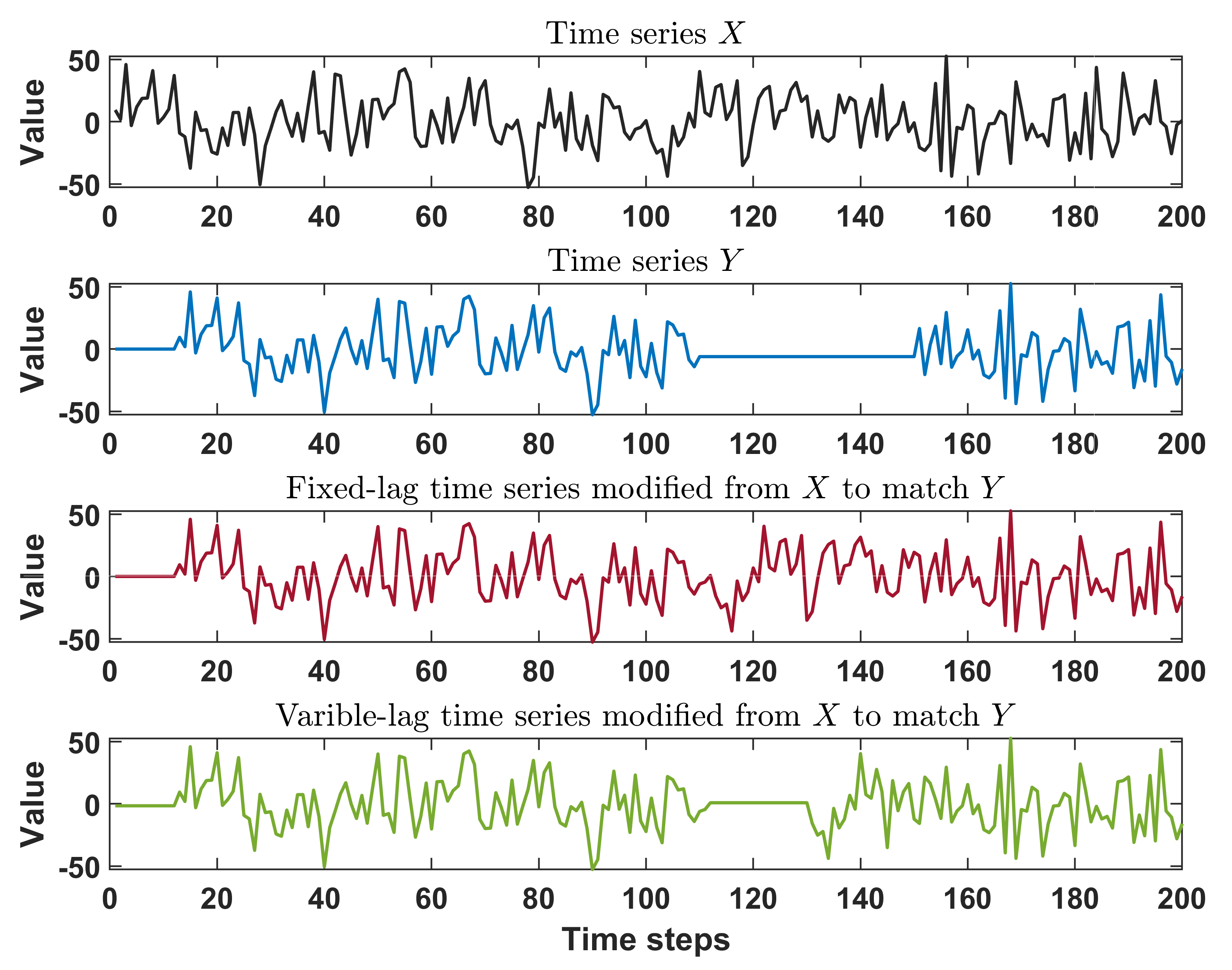}
\caption{ The comparison between the original time series $X$, variable-lag follower $Y$, fixed-lag time series modified from $X$ to match $Y$, and variable-lag time series modified from $X$ to match $Y$. The traditional Granger causality uses only fixed-lag version of $X$ to infer whether $X$ causes $Y$, while our approach uses both versions of $X$ to determine the causality between $X,Y$. Both $X,Y$ are generated from $\mathcal{N}$. $Y$ remains constant from time 110 to 170, which makes it a variable-lag follower of $X$. }
\label{fig:FixVsVarLagExample}
\end{figure}

See Appendix~\ref{apdx:SimCode} for the code we used to generated the datasets.
\subsubsection{Schools of fish}
\begin{figure}
\centering
\includegraphics[width=.8\columnwidth]{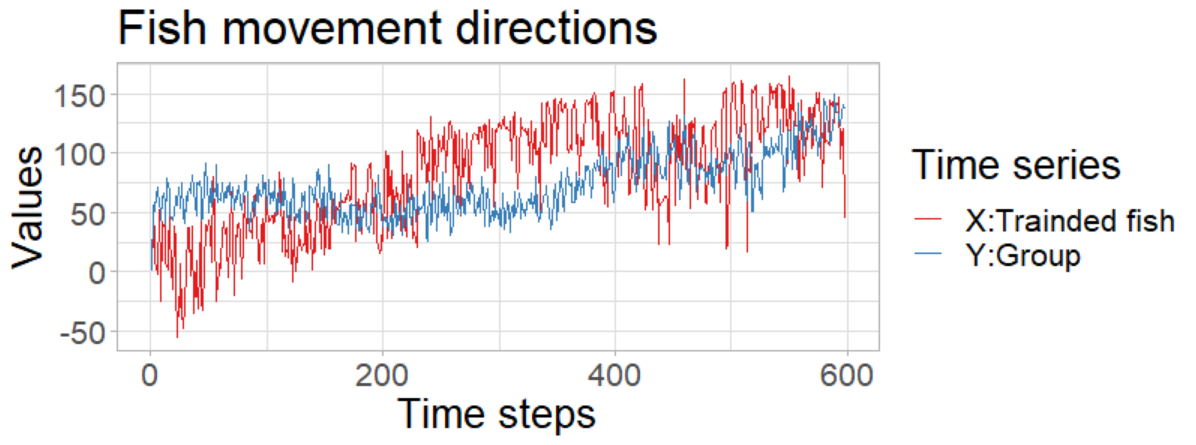}
\caption{ Time series of fish movement: $X$ is an aggregated time series of movement directions of trained fish and $Y$ is an aggregated time series of movement directions of untrained fish, which is the rest of the group. }
\label{fig:fishTS}
\end{figure}

We used the dataset of golden shiners (\emph{Notemigonus crysoleucas}) that is publiclly available. The dataset has been collected for the study of information propagation over the visual fields of fish \cite{strandburg2013visual}.  A coordination event consists of two-dimensional time series of fish movement that are recorded by video. The time series of fish movement are around 600 time steps.  The number of fish in each dataset is around 70 individuals, of which 10 individuals are ``informed" fish who have been trained to go to a feeding site. Trained fish lead the group to feeding sites while the rest of the fish just follow the group.  We represent the dataset as a pair of aggregated time series: $X$ being the aggregated time series of the directions of trained fish and $Y$ being the aggregated time series of the directions of untrained fish (see Fig.~\ref{fig:fishTS}). The task is to infer whether $X$ (trained fish) is  a cause of $Y$ (the rest of the group).

\subsubsection{Troop of baboons} 
\begin{figure}
\centering
\includegraphics[width=.8\columnwidth]{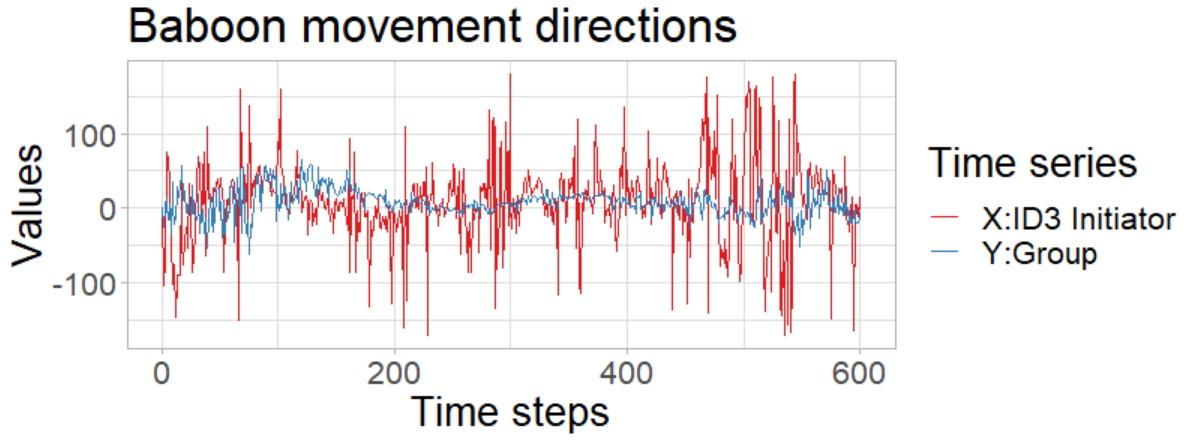}
\caption{ Time series of baboon movement: $X$ is a time series of movement directions of ID3 and $Y$ is an aggregated time series of movement directions of the rest of the group. }
\label{fig:baboonTS}
\end{figure}

We used another publicly available dataset of animal behavior, the movement of a troop of olive baboons (\emph{Papio anubis}). The dataset consists of GPS tracking information from 26 members of a troop, recorded at 1 Hz from 6 AM to 6 PM between August 01, 2012 and August 10, 2012. 
The troop lives in the wild at the Mpala Research Centre, Kenya \cite{crofoot2015data,strandburg2015shared}. For the analysis, we selected the 16 members of the troop that have GPS information  available for 10 consecutive days, with no missing data. We selected  a set of trajectories of lat-long coordinates from a highly coordinated event that has the length of 600 time steps (seconds) for each baboon.  This known coordination event is on August 02, 2012 in the morning, with the baboon ID3 initiating the movement, followed by the rest of the troop~\cite{FLICAtkdd}. Again, the goal is to infer ID3 (time series $X$) as the cause of the movement of the rest of the group (aggregate time series $Y$) (see Fig.~\ref{fig:baboonTS}).

\subsubsection{Gas furnace}
\begin{figure}
\centering
\includegraphics[width=.75\columnwidth]{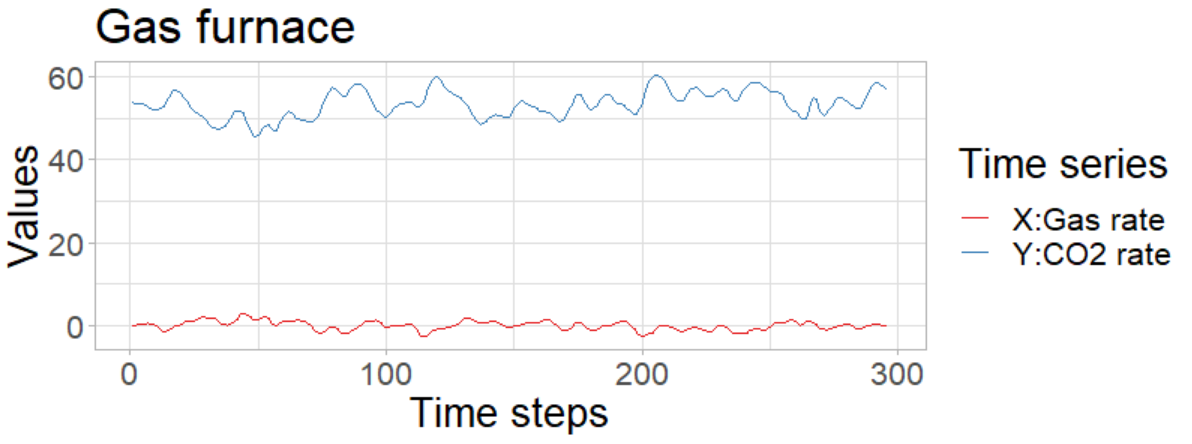}
\caption{ Time series of Gas furnace: $X$ is time series of gas consumption rate and $Y$ is time series of $CO_2$. }
\label{fig:gasfurnaceTS}
\end{figure}

 This dataset consists of information regarding a gas consumption by a gas furnace~\cite{box2015time}. $X$ is time series of gas consumption rate and $Y$ is time series of $CO_2$ rate produced by a gas furnace (see Fig.~\ref{fig:gasfurnaceTS}). Both $X,Y$ have 296 time steps.

\subsubsection{Old Faithful geyser eruption}

\begin{figure}
\centering
\includegraphics[width=.9\columnwidth]{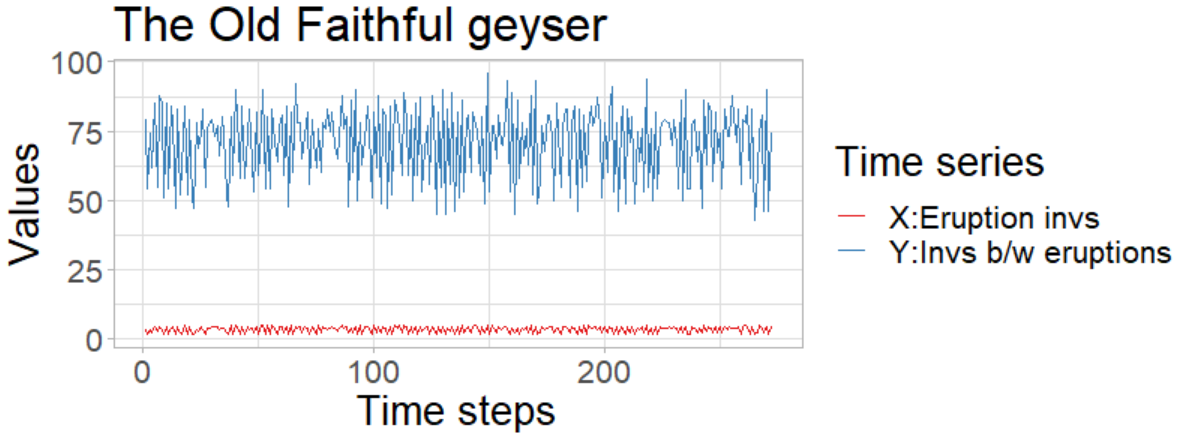}
\caption{ Time series of the Old Faithful geyser eruption: $X$ is time series of eruption intervals and $Y$ is time series of the intervals between current eruption and the next eruption. }
\label{fig:OldFFGeyserTS}
\end{figure}

 This dataset consists of information regarding  eruption durations and intervals between eruption events at Old Faithful geyser~\cite{azzalini1990look}. $X$ is time series of eruption duration and $Y$ is time series of the interval between current eruption and the next eruption (see Fig.~\ref{fig:OldFFGeyserTS}). Both $X,Y$ have 298 time steps.

\subsection{Time complexity and running time}

\begin{table}
\caption{Running time of our approaches with varying time series length $T$ and maximum time delay $\delta_{max}$.}
\label{tab:runningTime}
\begin{small}
\begin{tabular}{c|c|c|c|c|}
 \cline{2-5} 
                                                                      & \multicolumn{2}{c|}{\cellcolor[HTML]{CBCEFB}VL-G} & \multicolumn{2}{c|}{\cellcolor[HTML]{CBCEFB}VL-TE} \\ \hline
\rowcolor[HTML]{C0C0C0} 
\multicolumn{1}{|c|}{\cellcolor[HTML]{C0C0C0}$\delta_{max}/T$} & $T=5000$                  & $T=20000$                 & $T=5000$                  & $T=20000 $                 \\ \hline
\multicolumn{1}{|c|}{0.05}                                            & 5.39                    & 110.00                  & 17.57                   & 126.02                   \\ \hline
\rowcolor[HTML]{EFEFEF} 
\multicolumn{1}{|c|}{\cellcolor[HTML]{EFEFEF}0.10}                    & 7.90                    & 128.19                  & 17.42                   & 121.38                   \\ \hline
\multicolumn{1}{|c|}{0.20}                                            & 9.22                    & 200.17                  & 17.93                   & 131.23                   \\ \hline
\end{tabular}
\end{small}
\end{table}

The main cost of computation in our approach is DTW. We used the ``Windowing technique'' for the search area of warping ~\cite{keogh2001derivative}. The main parameter for windowing technique is the maximum time delay $\delta_{max}$. Hence, the time complexity of VL-G is $\mathcal{O}(T\delta_{max})$.  The time complexity of TE can be at most $\mathcal{O}(T^3)$~\cite{shao2014accelerating}, which makes VL-TE has the same time complexity. However, with the work by Kontoyiannis and Skoularidou in~\cite{kontoyiannis2016estimating}, the convergence rate of TE approximation can be reduced to $\mathcal{O}(1/\sqrt{T})$ if time series are generated with a Markov-chain property of a given lags.
Table~\ref{tab:runningTime} shows the running time of our approach on time series with the varying length ($T\in \{5000,20000\}$) and maximum time delay ($\delta_{max} \in \{0.05T,0.1T,0,2T\}$).

\section{Results}
We report the results of our proposed approaches and  other methods on both synthetic and real-world datasets. We also explore how the performance of the methods depends on the basic parameter, $\delta_{\max}$.

\subsection{Synthetic data: pairwise level}
\label{sec:simResPairwise}

\begin{figure}
\centering
\includegraphics[width=.8\columnwidth]{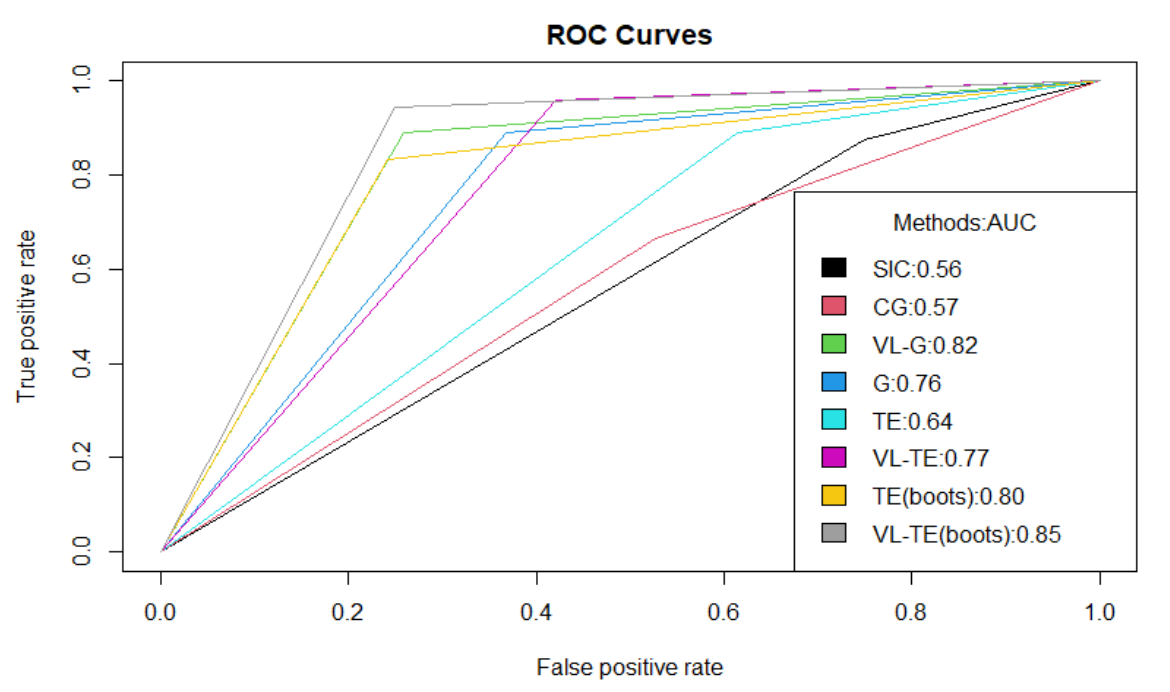}
\caption{ The ROC curves from the results of prediction using pairwise-level simulation datasets.}
\label{fig:ROCplot}
\end{figure}

Figure~\ref{fig:ROCplot} shows the ROC curves from the results of inferring causal relations and directions. According to the AUC values, all variable-lag methods perform better than their original methods (e.g. VL-G vs. G, VL-TE vs. TE).

 The result also shows that our method, VL-Transfer entropy with bootstrapping, VL-TE (boots),  performed better than the rest of other methods. The second best method is VL-Granger causality (VL-G), which has the AUC value almost the same as VL-TE (boots). For transfer entropy results, the bootstrapping methods (both VL-TE (boots) and TE (boots) ), performed better than their original version. This indicates that the bootstrapping approach increases the performance of transfer entropy methods in this task. 

Moreover, we also investigated the sensitivity of varying the value of the $\delta_{max}$ parameter for all methods. We  aggregated the accuracy of inferring causal direction from various cases that have the same $\delta_{max}$ value and report the result. The result in Fig.~\ref{fig:SynAvgAccDeltaMax} shows that VL-TE (boots), VL-G, TE (boots), and G can maintain the high accuracy (>0.9) throughout the range of the  values of $\delta_{max}$.  

\begin{figure}
\centering
\includegraphics[width=.8\columnwidth]{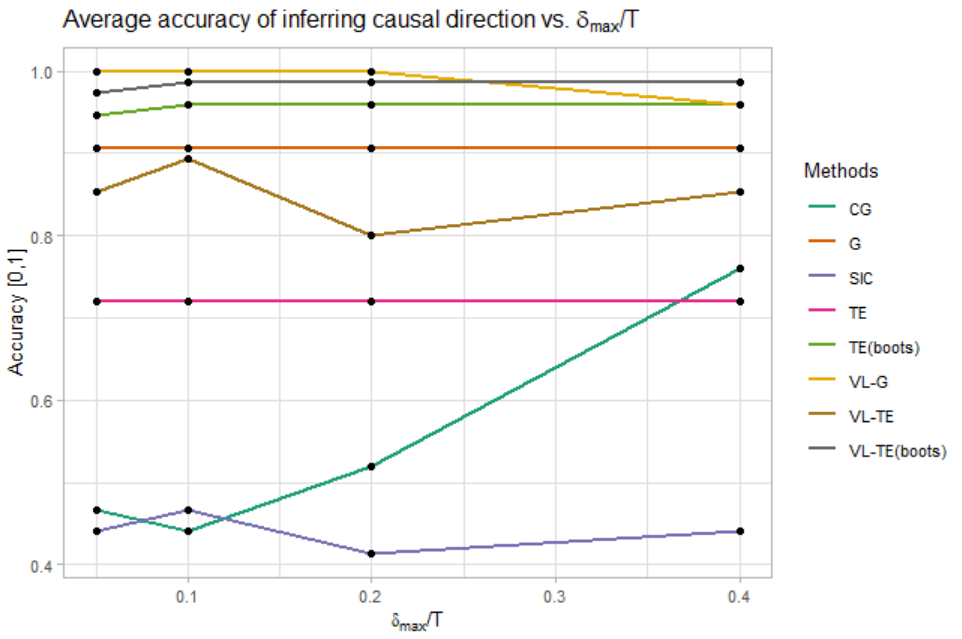}
\caption{ Average accuracy of inferring causal direction as a function of $\delta_{max}$. $x$-axis represents the value of $\delta_{max}$ as a fraction of the time series length $T$ and $y$-axis is the average accuracy. }
\label{fig:SynAvgAccDeltaMax}
\end{figure}

\subsection{Synthetic data: group level}
\label{sec:simResGroup}

 Table~\ref{tb:GrHSiblingSynRes} shows the result of causal graph inference. The VL-G performed the best overall with the highest F1 score.  This result reflects the fact that our approaches can handle complicated time series in causal inference task better than the rest of other methods. VL-TE also performed better than TE.  
 
 In addition, we aggregated $X=agg(\{X_1,X_2,X_3\})$ and $Y=agg(\{Y_1,Y_2,\dots,Y_{123}\})$, then we measured the ability of methods to infer that $X$ is  a cause of $Y$. The results, which are in the ``Group: $X\prec Y$" column in Table~\ref{tb:GrHSiblingSynRes}, show that   G, CG and SIC performed well in this task, while the rest of methods failed to infer causal relations. Note that VL-G also performed well when we relaxed the $\gamma$ from 0.3 to 0.01. This is due to the fact that the aggregated group time series have a complicated casual relation between  $X=agg(\{X_1,X_2,X_3\})$ and $Y=agg(\{Y_1,Y_2,\dots,Y_{123}\})$, which implies that the causal signal is not strong. Hence, we need to relaxed the $\gamma$ to capture the causal relation.
 
 Comparing transfer entropy methods, the bootstrapping approach decreaseed the performance to detect causal relations compared to their original version. This is also due to the weak signal of causal relation in the complicated datasets. 
 
 Overall, the simple original Granger causality performed well in both tasks. Moreover, due to the causal relations in simulation datasets are highly linear, hence, we expect the linear model (e.g. VL-G, G) should perform better than the non-linear approaches (e.g. TE, VL-TE). 
 
\begin{table}[]
\caption{The results of the precision, recall, and F1-score values of edges inference of causal graph in Fig.~\ref{fig:SynHSiblingRes}.  Each row is a method and each column is a measure type. The * indicates that the parameter $\gamma$ is changed from 0.3 to 0.01}
\label{tb:GrHSiblingSynRes}
\begin{small}
\begin{tabular}{c|c|c|c|c|}
\cline{2-5}
                                                      & \multicolumn{3}{c|}{\cellcolor[HTML]{CBCEFB}Causal graph} & \cellcolor[HTML]{CBCEFB}Group: $X\prec Y$ \\ \hline
\rowcolor[HTML]{C0C0C0} 
\multicolumn{1}{|c|}{\cellcolor[HTML]{C0C0C0}Methods} & Precision           & Recall          & F1 score          & Accuracy                                             \\ \hline
\multicolumn{1}{|c|}{VL-G}                            & 0.93	& 0.83	 & 0.87
              & 0.23/0.93*                                              \\ \hline
\rowcolor[HTML]{EFEFEF} 
\multicolumn{1}{|c|}{\cellcolor[HTML]{EFEFEF}G}       & 0.71	&0.99	&0.83
              & 0.97                                                 \\ \hline
\multicolumn{1}{|c|}{CG}                              & 0.04	&0.12	&0.06
              & 0.90                                                 \\ \hline
\rowcolor[HTML]{EFEFEF} 
\multicolumn{1}{|c|}{\cellcolor[HTML]{EFEFEF}SIC}     &0.03	 &0.11	& 0.05
              & 0.93                                                 \\ \hline
\multicolumn{1}{|c|}{TE}                              & 0.17	& 0.62	& 0.26
              & 0.50                                                 \\ \hline
\rowcolor[HTML]{EFEFEF} 
\multicolumn{1}{|c|}{\cellcolor[HTML]{EFEFEF}VL-TE}   &0.24	& 0.71	& 0.35
              & 0.47                                                 \\ \hline
\multicolumn{1}{|c|}{TE (boots)}                 & 0.08	& 0.17	&0.11
              & 0.30                                                 \\ \hline
\rowcolor[HTML]{EFEFEF} 
\multicolumn{1}{|c|}{\cellcolor[HTML]{EFEFEF}VL-TE (boots)}   & 0.08	&0.18	&0.11
      & 0.07                                                 \\ \hline
\end{tabular}
\end{small}
\end{table}

\subsection{Real-world datasets}

\begin{table}[]
\caption{The result of inferring causal relations in real-world datasets. Each row is a dataset and each column is a method. An element is one if a method successfully inferred a causal relation with some parameter, while an element is zero if no parameter setting in a method can be used to successfully inferred a causal relation. For VL-G, we used both  BIC difference ratio and F-test to infer causal relation. The * implies that VL-G with BIC difference ratio failed to detect causal relations but VL-G with F-test successfully detect the relations. For fish and baboon datasets, VL-G with both criteria were able to detect causal relations. }
\label{tab:RealWorldResult}
\begin{small}
\begin{tabular}{c|c|c|c|c|c|c|c|c|}
\cline{2-9}
                                                      & \multicolumn{8}{c|}{\cellcolor[HTML]{CBCEFB}Methods} \\ \hline
\rowcolor[HTML]{C0C0C0} 
\multicolumn{1}{|c|}{Case} & VL-G & G & CG & SIC & TE & VL-TE & TE (boots) & VL-TE (boots) \\ \hline
\multicolumn{1}{|c|}{Fish}                         & 1    & 0 & 1  & 0   & 1  & 1 & 0  & 0    \\ \hline
\rowcolor[HTML]{EFEFEF} 
\multicolumn{1}{|c|}{Baboon}                       & 1    & 1 & 1  & 1   & 1  & 1  & 0  & 0    \\ \hline
\multicolumn{1}{|c|}{Gas furnace}                  & 1*   & 1 & 0  & 1   & 1  & 1   & 1  & 0   \\ \hline
\rowcolor[HTML]{EFEFEF} 
\multicolumn{1}{|c|}{Old faithful geyser}          & 1*   & 0 & 1  & 1   & 0  & 1   & 0  & 0   \\ \hline
\end{tabular}
\end{small}
\end{table}

Table~\ref{tab:RealWorldResult} shows results of inferring causal relations in real-world datasets. For  VL-G, it performed better than G. However, BIC difference ratio failed to infer causal relations of  gas furnace and old faithful geyser datasets but F-test successfully inferred causal relations in all datasets. Typically,  a  causal relation that has a high BIC difference ratio can also be detected to have a causal relation by F-test but not vise versa. This suggests that gas furnace and old faithful geyser have weak causal relations. For G, the method cannot detect fish and Old faithful geyser datasets. This suggests that both datasets have a high-level of variable lags that a fixed-lag assumption in G has an issue. For CG, SIC, and TE, they failed in one dataset each. This implies that some dataset that a specific approach failed to detect a causal relation has broke some assumption of a specific approach. Lastly, VL-TE was able to detect all causal relations. 

For the old faithful geyser dataset, both G and TE failed to detect a causal relation while both VL-G and VL-TE successfully inferred a causal relation. This implies that this dataset has a high-level of variable lags that broke a fix-lag assumption of G and TE.  

Lastly, the transfer entropy methods with bootstrapping almost failed to detect anything. This is due to the weak signal of causal relations in real-world datasets.

\subsection{Variable lags vs. fixed lag}

\subsubsection{VL-Granger causality} To compare the performance of VL-G and G, we simulated 100 datasets of $X \prec Y$ with variable lags. Since $X \prec Y$, a higher BIC difference ratio implies a better result. Fig.~\ref{fig:VLGrangerBICDiffRatio} shows the results of BIC difference ratio for VL-G and G. Obviously, VL-G has a higher BIC difference ratio than G's. This suggests that VL-G was able to capture stronger signal of $X$ causes $Y$. 

\begin{figure}
\centering
\includegraphics[width=1\columnwidth]{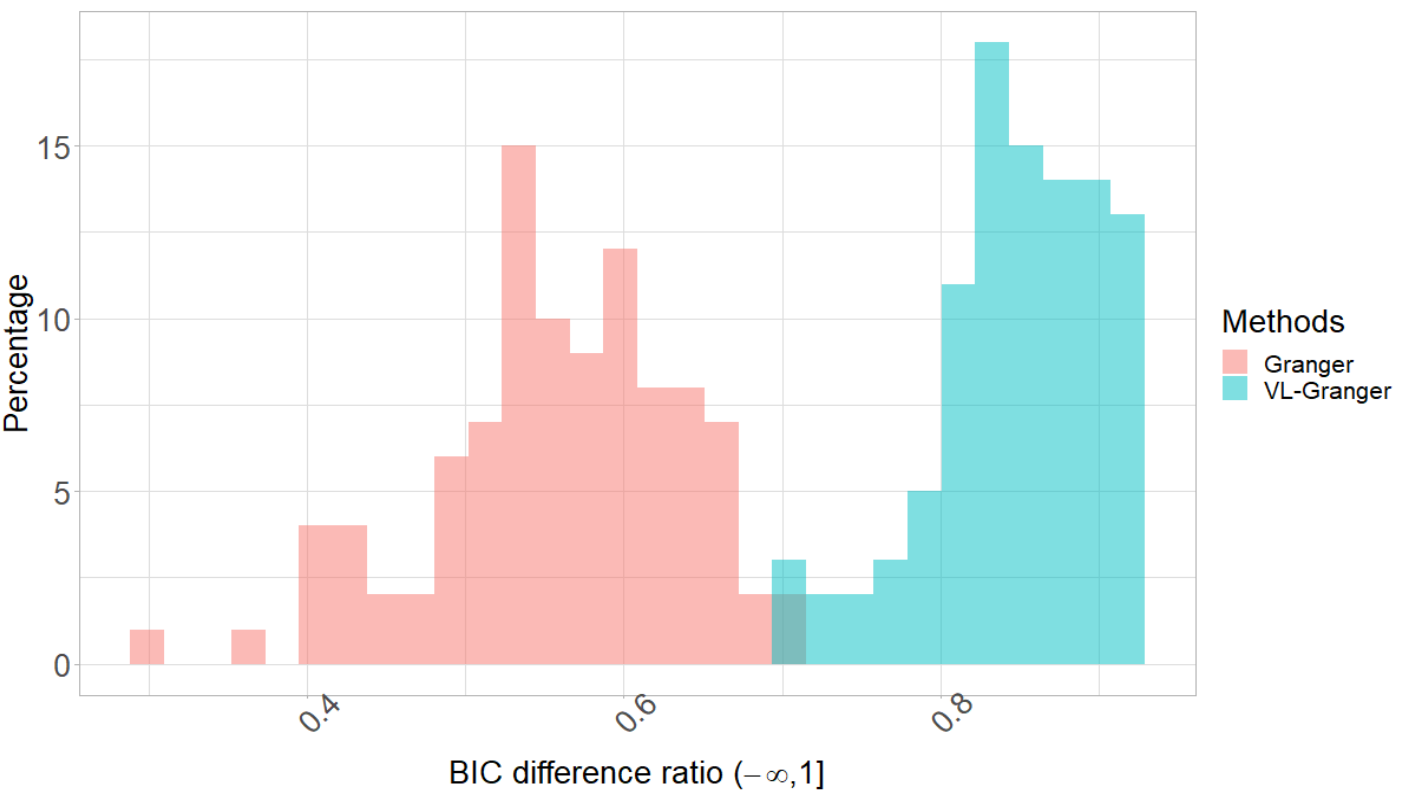}
\caption{ Empirical distributions of BIC difference ratios of VL-Granger and Granger methods inferred from simulation data of $X \prec Y$. Higher BIC difference ratio implies better model if $X$ is the cause of $Y$.  }
\label{fig:VLGrangerBICDiffRatio}
\end{figure}

\subsubsection{VL-Transfer Entropy} To compare the performance of VL-TE and TE, we also simulated 100 datasets of $X \prec Y$ with variable lags. Since $X \prec Y$, a higher transfer entropy ratio implies a better result. Fig.~\ref{fig:VL-TERatio} shows the results of transfer entropy ratio for VL-TE and TE. Obviously, VL-TE has a higher transfer entropy ratio than TE's. This suggests that VL-TE was able to capture stronger signal of $X$ causes $Y$. 

\begin{figure}
\centering
\includegraphics[width=1\columnwidth]{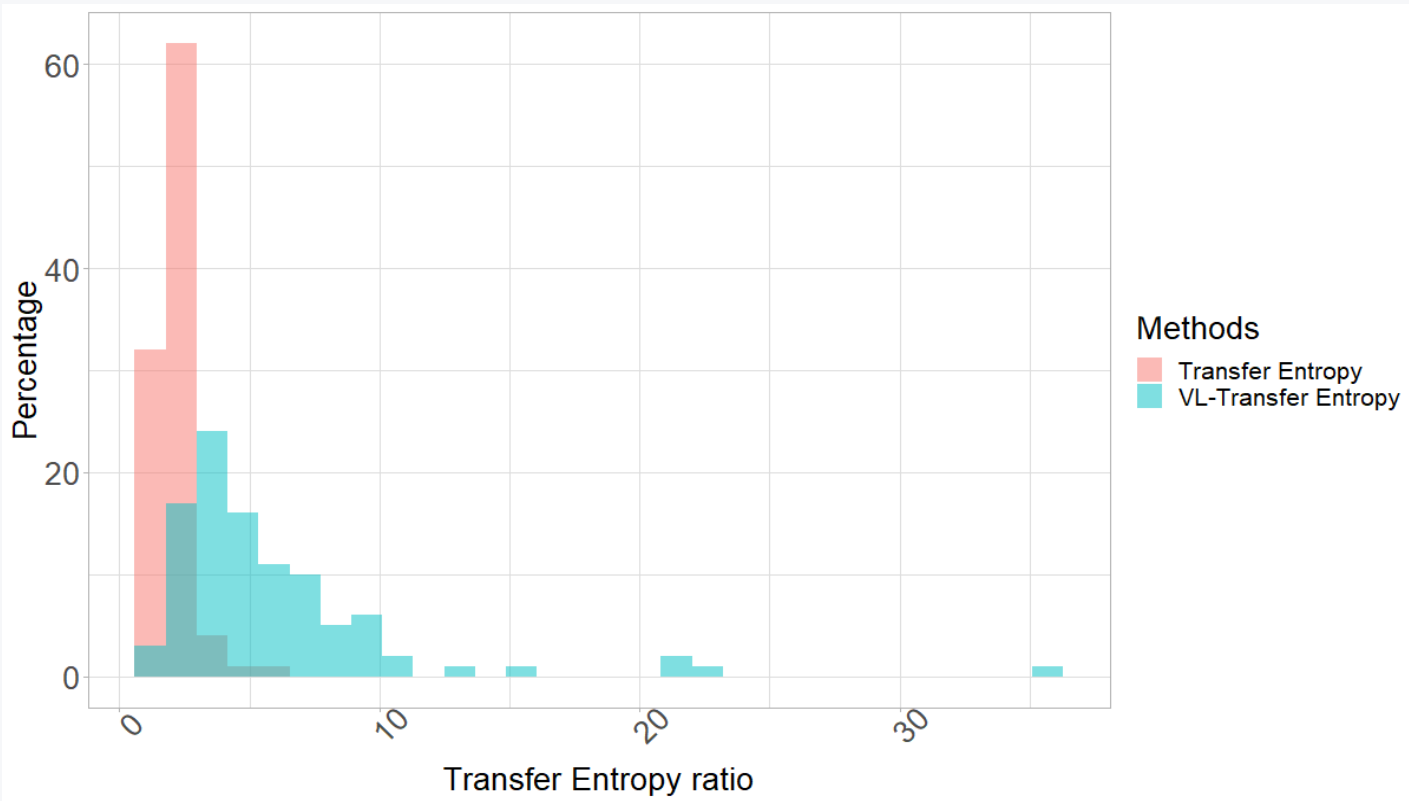}
\caption{  Empirical distributions of transfer entropy ratios of VL-transfer entropy and transfer entropy methods inferred from simulation data of $X \prec Y$. Higher transfer entropy ratio implies better model if $X$ is the cause of $Y$.   }
\label{fig:VL-TERatio}
\end{figure}

\section{Conclusions}
In this work, we proposed a method to infer Granger and transfer entropy causal relations  in time series where the causes influence effects with arbitrary time delays, which can change dynamically. We formalized a new Granger causal relation and a new transfer entropy causal relation, proving that they are true generalizations of the traditional Granger causality and transfer entropy respectively. We demonstrated on both carefully designed synthetic datasets and noisy real-world datasets that the new causal relations can address the arbitrary-time-lag influence between cause and effect, while the traditional Granger causality and transfer entropy cannot. Moreover, in addition to  improving and extending Granger causality and transfer entropy,  our approach can be applied to infer leader-follower relations, as well as the dependency property between cause and effect. Note that, in simulation datasets, we did not include nonlinear datasets in our analysis. We expect that the linear measures (e.g. VL-Granger and Granger) should outperform the non-linear measures (Transfer Entropy and VL-Transfer Entropy) in the linear datasets, while the non-linear measures should outperform linear measures in non-linear datasets. 

We have shown that, in many situations, the causal relations between time series do not have a lock-step connection of a fixed lag that the traditional Granger causality and transfer entropy assume. Hence, traditional Granger causality and transfer entropy missed true existing causal relations in such cases, while our methods correctly inferred them. Our approach can be applied in any domain of study where the causal relations between time series is of interest. The R-CRAN package entitled \textit{VLTimeCausality} is provided at~\cite{ShareSourcecode}. See Appendix~\ref{apdx:manual} for the example of how to use the package. 
\appendix
\section{Appendix: Dynamic time warping}
\label{apdx:DTW}
The Dynamic Time Warping (DTW)~\cite{sakoe1978dynamic} is one of well-known distance measures between a pairwise of time series. The main idea of DTW is to compute the distance from the matching of similar elements between time series.  The series of indices of matching is called ``Warping path". Given time series $X,Y$ that have length $T_X$ and $T_Y$ respectively, their warping path is defined as $P=(\Delta_1,\dots,\Delta_K)$ where the following conditions are true~\cite{keogh2001derivative}: 

\numsquishlist
\item $\Delta_1 = (1,1)$,
\item $\Delta_K = (T_X,T_Y)$,
\item $\max(\{T_X,T_Y\})\leq K < T_X+T_Y-1$ and
\item for all pair $\Delta_{t-1} = (i',j'),\Delta_{t}= (i,j)$, we have $\Delta_{t-1}\in \{(i-1,j),(i,j-1),(i-1,j-1)\}$ where  $i'\geq 1$ and $j'\geq 1$.
\numsquishend

Each $\Delta = (i,j)$ in $P$ represents the matching indices where $X(i)$ is matched with $Y(j)$.  Suppose $\mathbb{P}$ is a set of all possible warping paths that satisfy the conditions above, the following equation represents the DTW distance between $X,Y$. 

\begin{equation}
\label{eq:dtwdist}
    d_{DTW} = \min_{P \in \mathbb{P}} \sum_{\Delta_t \in P,\Delta_t = (i,j)} D(i,j).
\end{equation}
Where $D(i,j)$ is a distance function between $X(i),Y(j)$. If we use the Euclidean distance, then $D(i,j)=\sqrt{X(i)^2+Y(j)^2}$. A warping path $P^*$ that minimizes the Eq.~\ref{eq:dtwdist} is called an ``optimal warping path". The Eq.~\ref{eq:dtwdist} solution can be solved by the dynamic programming technique.  In the the dynamic programming, given $\mathcal{D}(i,j)$ as a DTW distance of time series $X$ within the interval $[1,i]$, and time series $Y$ within the interval $[1,j]$, we can use the following equation to compute  $\mathcal{D}(i,j)$~\cite{senin2008dynamic}.

\begin{equation}
  \mathcal{D}(i,j)=\left\{
  \begin{array}{@{}ll@{}}
	D(i,j), & i =1,j=1 \\
   \mathcal{D}(i,j-1)+D(i,j), & i=1,j>1 \\
    \mathcal{D}(i-1,j)+D(i,j), & i>1,j=1\\
    D(i,i)+\min( \{\mathcal{D}(i-1,j),\mathcal{D}(i,j-1),\mathcal{D}(i-1,i-1)\}     ), & \mathrm{Otherwise}.
  \end{array}\right.
\label{eq:DTWdyproc}
\end{equation}

For time series $X,Y$, our goal is to compute the DTW distance $d_{DTW}=\mathcal{D}(T_X,T_Y)$, of which its solution can be founded using the Algorithm~\ref{algo:DTWDistFunc}.

\setlength{\intextsep}{0pt}
\IncMargin{1em}
\begin{algorithm2e}
\caption{ DTWFunction}
\label{algo:DTWDistFunc}
\SetKwInOut{Input}{input}\SetKwInOut{Output}{output}
\Input{ Time series $X,Y$ that have length $T_X$ and $T_Y$ respectively.}
\Output{ $T_X\times T_Y$-Matrix $\mathcal{D}$, the DTW distance $d_{DTW}$, and DTW optimal warping path $P$. }
\begin{small}
\SetAlgoLined
\nl Let $D$ be a $T_X\times T_Y$-Matrix matrix of  Euclidean distances of elements $X$ and $Y$ s.t. $D(i,j)=\sqrt{X(i)^2+Y(j)^2}$\;
\nl Set $\mathcal{D}(1,1)=D(1,1)$\;
\nl \For{$t=2\to T_X$}{
$\mathcal{D}(t,1)=\mathcal{D}(t-1,1)+D(t,1)$\;
}
\nl \For{$t=2\to T_Y$}{
$\mathcal{D}(1,t)=\mathcal{D}(1,t-1)+D(1,t)$\;
}

\nl \For{$t_X=2\to T_X$}
{
\nl    \For{$t_Y=2\to T_Y$}{
\nl    $\mathcal{D}(t_X,t_Y)= D(t_X,t_Y)+\min( \{\mathcal{D}(t_X-1,t_Y),\mathcal{D}(t_X,t_Y-1),\mathcal{D}(t_X-1,t_Y-1)\}     )$\;
    }
}
\nl $d_{DTW}=\mathcal{D}(T_X,T_Y)$\;
\nl $P^*$=WarpingPathFindingFunction($\mathcal{D}$)\;
\nl Return $\mathcal{D},d_{DTW},P^*$\;
\end{small}
\end{algorithm2e}\DecMargin{1em}

In Algorithm~\ref{algo:DTWDistFunc} line 1, we compute Euclidean distance for all pair $X(i),Y(j)$ and keep the result in $D(i,j)$. Then, in the line 2-4, we compute the base-case distance ($\mathcal{D}(1,1)=D(1,1)$), and accumulated distances around the marginal areas of the matrix $\mathcal{D}$.  In the line 5-8, we use Eq.~\ref{eq:DTWdyproc} to compute $\mathcal{D}(i,j)$. The $d_{DTW}$ is reported at the line 9. In the line 10, we infer the optimal warping path by backtracking the steps from $\mathcal{D}(T_X,T_Y)$ to $\mathcal{D}(1,1)$ using the Algorithm~\ref{algo:DTWWarpingPathFunc}.

\setlength{\intextsep}{0pt}
\IncMargin{1em}
\begin{algorithm2e}
\caption{ WarpingPathFindingFunction}
\label{algo:DTWWarpingPathFunc}
\SetKwInOut{Input}{input}\SetKwInOut{Output}{output}
\Input{ $T_X\times T_Y$-Matrix $\mathcal{D}$.}
\Output{ DTW optimal warping path $P^*$. }
\begin{small}
\SetAlgoLined

\nl Set $P'(1) = (T_X,T_Y)$, $k=1$, and $\Delta^*=(T_X,T_Y)$\;
\nl \While{$\Delta^*\neq (1,1)$}
{
\nl    Let $(i,j)=P'(k)$ and $\mathcal{D}(a)= \mathcal{D}(a_1,a_2)$ where $a=(a_1,a_2)$ \;
\nl    Let $I\subseteq\{(i-1,j),(i-1,j-1),(i,j-1)\}$ s.t. $\forall (k,l) \in I, k\geq 1, l\geq 1$\;
\nl    $\Delta^*= \argmin_{\Delta \in I}\mathcal{D}(\Delta)$\;
\nl    $P'(k+1)=\Delta^*$\;
\nl    $k=k+1$\;
}
\nl Let $P'$ have a length $K$\;
\nl Let $P^*$ be the optimal warping path with length $K$ where $\forall i \in \{1,\dots,K\}, P^*(i)=P'(K-i+1)$\;
\nl Return $P^*$\;
\end{small}
\end{algorithm2e}\DecMargin{1em}

In Algorithm~\ref{algo:DTWWarpingPathFunc}, starting at the cell $\mathcal{D}(T_X,T_Y)$ (line 1), we search for the neighbor cell in $\mathcal{D}$ that have the lowest accumulative distance ($\Delta^*= \argmin_{\Delta \in I}\mathcal{D}(\Delta)$). Then, we mark the minimum-distance neighbor cell ($P'(k+1)=\Delta^*$)  as well as jumping to the marked cell ($k=k+1$) and continue for the next iteration (line 2-6). We repeat the steps of marking the minimum-distance neighbor cell until we meet the $\mathcal{D}(1,1)$ cell. The list of all marked cells is the optimal warping path ($P^*$). 

\begin{figure}
\centering
\includegraphics[width=.95\columnwidth]{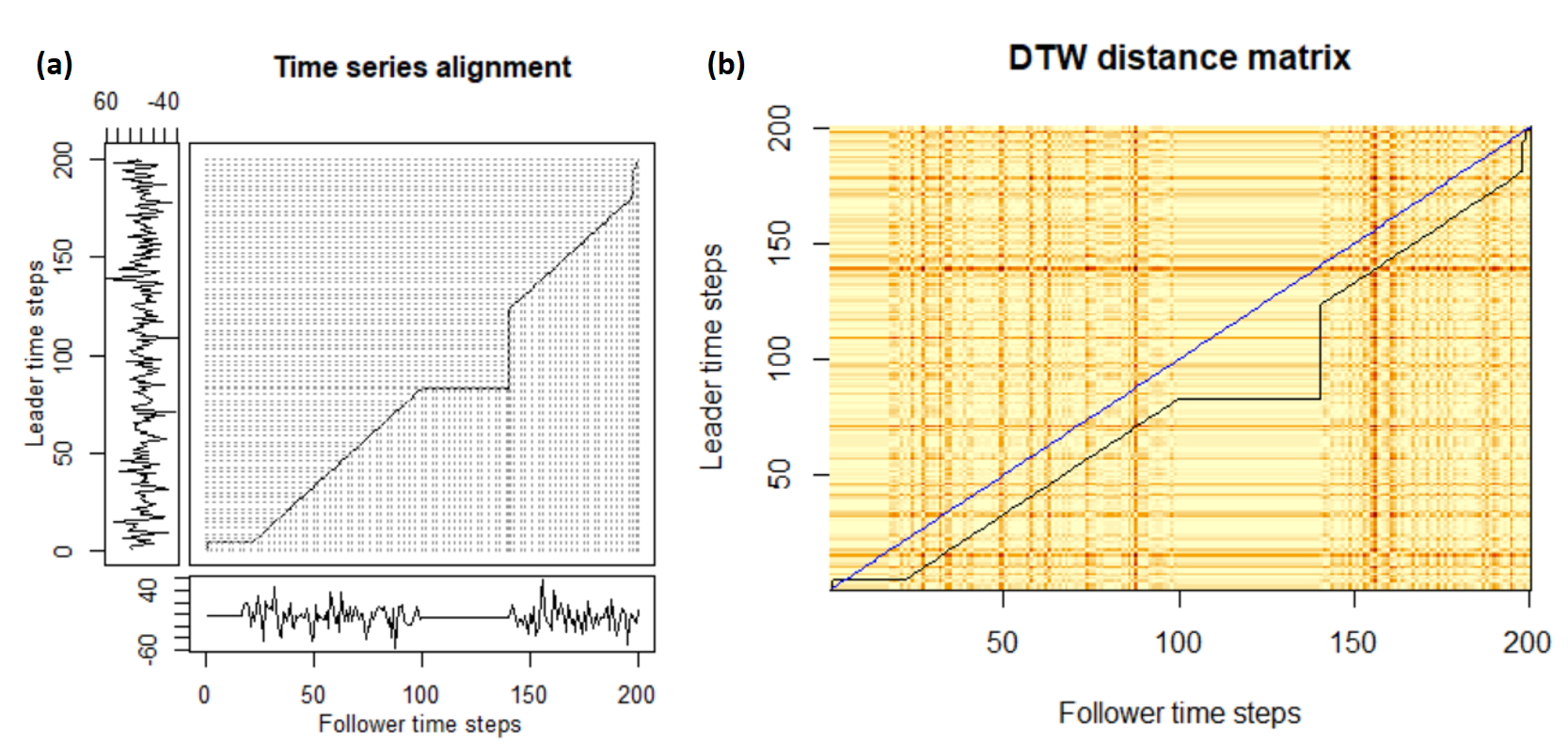}
\caption{ The example of time series alignment by DTW.  In this example, the follower time series imitates the leader with time delay $17$ time steps. Between 110th and 150th time steps, the follower constantly imitates leader at the 83th time step. (a) The matching between elements of follower time series and elements of leader time series. The black line is the optimal warping path. (b) The heatmap of accumulative distance matrix $\mathcal{D}$ of DTW and the optimal warping path (black line) compared against the diagonal line (blue line). A darker color represents a higher distance.   }
\label{fig:DTWex}
\end{figure}

Figrue~\ref{fig:DTWex} illustrates the example of DTW matching between two time series. In this example, the follower time series imitates the leader with time delay $17$ time steps. Then between 110th and 150th time steps, the follower constantly imitates leader at the 83th time step. The Figure~\ref{fig:DTWex} (a) shows the matching of elements between time series. The black line is the optimal warping path. We can see that, in the optimal warping path, the elements between 110th and 150th time steps of follower time series matched with the element of leader at the 83th time step. The Figure~\ref{fig:DTWex} (b) shows the DTW accumulative distance matrix $\mathcal{D}$.  The optimal warping path is in the black color, while the blue line is a diagonal line of the matrix. A darker color represents a higher distance. We can see that the optimal warping path is below the diagonal line. This implies that the follower elements are matched with the leader elements back in time. Specifically, for any pair of indices $(i,j)$ within the optimal warping path, $\Delta= j-i>0$, when the optimal warping path is below the diagonal line. The element $Y(j)$ is matched with $X(i)$ in the past. Hence, we can infer whether $X\prec Y$ using their optimal warping path. 

\section{Appendix: VLTimeCausality package}
\label{apdx:manual}
The VLTimeCausality package contains the implementation of VL-Granger causality, Granger causality, and VL-Transfer entropy.  The package is available on the the Comprehensive R Archive Network (CRAN). This implies all R programming users can install our package anywhere. To install the package, we can use the following commands.

\begin{lstlisting}[language=R]
R>install.packages("VLTimeCausality")
\end{lstlisting}

To use the package, the first step is to use the provided function to generate simulation time series.
\begin{lstlisting}[language=R]
R>library(VLTimeCausality)
R>TS <- VLTimeCausality::SimpleSimulationVLtimeseries() 
\end{lstlisting}
The TS variable contains TS\$X and TS\$Y where TS\$X causes TS\$Y. Then, we can run VL-Granger causality with the $\gamma = 0.5$ below.
\begin{lstlisting}[language=R]
R>out<-VLTimeCausality::VLGrangerFunc(Y=TS$Y,X=TS$X, gamma= 0.5) 
\end{lstlisting}
The result of inference is below.
\begin{lstlisting}[language=R]
R> out$XgCsY
[1] TRUE
R> out$BICDiffRatio
[1] 0.8434518
\end{lstlisting}
It implies that TS\$X causes TS\$Y  (out\$XgCsY is true) with the BIC difference ratio at 0.84.

For the VL-Transfer Entropy, the following command is used to check whether TS\$X causes TS\$Y with the number of bootstrap replicates is 100 and the significance level $\alpha = 0.05$.
\begin{lstlisting}[language=R]
R> out2<-VLTransferEntropy(Y=TS$Y,X=TS$X,VLflag=TRUE,nboot=100, alpha = 0.05)
\end{lstlisting}
The result of inference is below.
\begin{lstlisting}[language=R]
R> out2$XgCsY_trns
[1] TRUE
R> out2$TEratio
[1] 4.539785
R> out2$pval
[1] 0
\end{lstlisting}
It implies that TS\$X causes TS\$Y  (out2\$XgCsY\_trns is true) with the  transfer entropy ratio at 4.54 and the p-value is at 0. For more details about functions and parameters in the packages, please see  \href{https://cran.r-project.org/package=VLTimeCausality}{https://cran.r-project.org/package=VLTimeCausality}.

\section{Appendix: Simulation generating code}
\label{apdx:SimCode}
The following code was used to generate simulation datasets that were analyzed and reported the results in Section~\ref{sec:simResPairwise}. We deployed the ``rmatio'' package~\cite{rmatio} for files operation handling. Although our simulation datasets were generated randomly, we set the random seeds to make it being able to be replicated. 

\begin{lstlisting}[language=R]
library(rmatio)
library(VLTimeCausality)
origSeed<-2020
set.seed(origSeed)
rounds<-15
seeds<-runif(rounds,1000,250000)
simType1DataSets<-list()

# normal gen
for(i in seq(rounds))
{
  simType1DataSets[["normalPos"]][[i]]<- SimpleSimulationVLtimeseries(expflag = FALSE, arimaFlag = FALSE,causalFlag = TRUE, seedVal = seeds[i] )
  simType1DataSets[["normalNeg"]][[i]]<- SimpleSimulationVLtimeseries(expflag = FALSE, arimaFlag = FALSE,causalFlag = FALSE, seedVal = seeds[i] )
  
  simType1DataSets[["ARMAPos"]][[i]]<- SimpleSimulationVLtimeseries(expflag = FALSE, arimaFlag = TRUE,causalFlag = TRUE, seedVal = seeds[i] )
  simType1DataSets[["ARMANeg"]][[i]]<- SimpleSimulationVLtimeseries(expflag = FALSE, arimaFlag = TRUE,causalFlag = FALSE, seedVal = seeds[i] )
  
  simType1DataSets[["normalARMANeg"]][[i]]<- simType1DataSets[["normalNeg"]][[i]]
  simType1DataSets[["normalARMANeg"]][[i]]$Y<-simType1DataSets[["ARMANeg"]][[i]]$Y
  
}
simType1DataSets$origSeed<-origSeed
simType1DataSets$seeds<-seeds
simType1DataSets$rounds<-rounds
write.mat(file = "simType1DataSets.mat",object = simType1DataSets)
\end{lstlisting}

The following code was used to generate simulation datasets that were analyzed and reported the results in Section~\ref{sec:simResGroup}. 

\begin{lstlisting}[language=R]
library(rmatio)
library(VLTimeCausality)
origSeed<-2020
set.seed(origSeed)
rounds<-15
seeds<-runif(rounds,1000,250000)
simType2DataSets<-list()

# normal gen
for(i in seq(rounds))
{
  
  TS<- MultipleSimulationVLtimeseries(seedVal = seeds[i], arimaFlag = FALSE)
  simType2DataSets[["normal"]][[i]]<-TS
  simType2DataSets[["normalX"]][[i]]<-rowMeans(TS[,1:3])
  simType2DataSets[["normalY"]][[i]]<-rowMeans(TS[,4:10])

  
  TS<- MultipleSimulationVLtimeseries(seedVal = seeds[i], arimaFlag = TRUE)
  simType2DataSets[["ARMA"]][[i]]<-TS
  simType2DataSets[["ARMAX"]][[i]]<-rowMeans(TS[,1:3])
  simType2DataSets[["ARMAY"]][[i]]<-rowMeans(TS[,4:10])

}
simType2DataSets$origSeed<-origSeed
simType2DataSets$seeds<-seeds
simType2DataSets$rounds<-rounds
write.mat(file = "simType2DataSets.mat",object = simType2DataSets)
\end{lstlisting}

\balance
\bibliographystyle{ACM-Reference-Format}

\balance

\end{document}